\documentclass[letterpaper, 10 pt, conference]{ieeeconf}  %

\IEEEoverridecommandlockouts                              %

\overrideIEEEmargins                                      %

\pdfminorversion=4

\usepackage{enumerate}

\newif\ifmessy
\messyfalse

\newif\ifICRA %
\ICRAtrue

\newif\ifappendix %
\appendixtrue

\title{\LARGE \bf
Leveraging Symmetry to Accelerate Learning of Trajectory  \\ Tracking Controllers for Free-Flying Robotic Systems
}

\author{Jake Welde${^{*}}$, Nishanth Rao${^{*}}$, Pratik Kunapuli${^{*}}$, Dinesh Jayaraman, and Vijay Kumar %
\thanks{
${^*}$ indicates equal contribution.
The authors are with the GRASP Laboratory at the University of Pennsylvania, Philadelphia,
PA 19104 USA.
 emails: \texttt{\{jwelde, nishrao, pratikk, dineshj, kumar\}@seas.upenn.edu}.
We gratefully acknowledge the support of 
ARL DCIST CRA W911NF-17-2-0181, %
NSF Grant CCR-2112665, %
NVIDIA,
and the NSF Graduate Research Fellowship Program.
Open-source code is available at 
\url{https://pratikkunapuli.github.io/EQTrackingControl/}.
}}%

\usepackage{times}

\usepackage{multicol}
\usepackage{multirow}
\usepackage{array}
\usepackage{tabularx}
\newcolumntype{L}{>{\raggedright\arraybackslash}X}
\newcolumntype{R}{>{\raggedleft\arraybackslash}X}
\newcolumntype{C}{>{\centering\arraybackslash}X}

\usepackage{amsmath, amssymb}
\usepackage{relsize}
\usepackage{dsfont}
\usepackage{graphicx}

\usepackage{subfigure}

\DeclareMathOperator{\id}{id}
\DeclareMathOperator{\pr}{pr}

\usepackage{amsthm}
\usepackage{thmtools}
\renewcommand\thmcontinues[1]{\textit{continued}}

\declaretheoremstyle[notefont=\normalfont\itshape,bodyfont=\normalfont]{normaltext}

\newtheorem{theorem}{Theorem}

\declaretheorem[name=Definition,style=normaltext]{definition}
\declaretheorem[name=Remark,style=normaltext]{remark}
\declaretheorem[name=Example,style=normaltext]{example}

\newtheorem{lemma}{Lemma}

\newcommand\transition[0]{\tau}
\newcommand\dynamics[0]{f}
\newcommand\reward[0]{R}

\newcommand{\manifold}[1]{\mathcal{#1}}
\newcommand{\identity}[0]{{1}}
\newcommand{\dt}[0]{\, \mathrm{dt}}
\newcommand{\norm}[1]{\|{#1}\|}

\usepackage{cite}

\usepackage{balance}

\usepackage{makecell}

\usepackage{tikz, tikzscale}
\usepackage{tikz-3dplot}
\usetikzlibrary{shapes,calc,positioning}

\usepackage[final,bookmarks=true, hidelinks]{hyperref}

\setlength{\marginparwidth}{1.6cm}
\usepackage[addedmarkup=uline, defaultcolor=magenta, todonotes={textsize=scriptsize, textwidth=2cm}, authormarkuptext=name, commandnameprefix=always, xcolor]{changes} %
\definechangesauthor[name={JD}, color=orange]{jd}

\definechangesauthor[name={JW}, color=green]{jw}

\definechangesauthor[name={PK}, color=blue]{pk}

\definechangesauthor[name={NR}, color=gray]{nr}

\begin{document}

\maketitle

\begin{abstract}
    Tracking controllers enable robotic systems to accurately follow planned reference trajectories. In particular, reinforcement learning 
    (RL)
    has shown promise in the synthesis of controllers for systems with complex dynamics and modest online compute budgets.
    However, 
    the
    poor sample efficiency 
    of RL 
    and the challenges of reward design make training slow and sometimes unstable, especially for high-dimensional systems.
    In this work, we leverage the  inherent Lie group symmetries of robotic systems with a floating base
    to mitigate these challenges when learning tracking controllers.
    We model a general tracking problem as a Markov decision process (MDP) that captures the evolution of both the physical and reference states. Next, we 
    \ifICRA
    prove
    \else
    show
    \fi
    that symmetry in the underlying dynamics and running costs leads to an MDP homomorphism,
    a mapping that allows a policy trained on a lower-dimensional ``quotient'' MDP to be lifted to an optimal tracking controller for the original system.
    We compare this symmetry-informed approach to an unstructured baseline, using Proximal Policy Optimization (PPO) to learn tracking controllers for three systems: the {\normalfont \texttt{Particle}} (a forced point mass), the {\normalfont \texttt{Astrobee}} (a fully-actuated space robot), and the {\normalfont \texttt{Quadrotor}} (an underactuated system). Results show that a symmetry-aware approach both accelerates training and 
    reduces tracking error 
    at convergence.

\end{abstract}

\section{Introduction}

To achieve real-time operation, most robotic systems utilize a {``tracking controller''} to stabilize a pre-planned {reference trajectory}.
However, tracking controllers designed analytically often assume properties not enjoyed by all robotic systems (\textit{e.g.},  ``full actuation'' \cite{Bullo1999,Maithripala2006,Welde2024} or ``differential flatness'' \cite{Fliess1997}), while optimization-based methods frequently rely on linearization or simplified models to meet compute constraints \cite{Nguyen2024}. In contrast, 
 controllers trained via reinforcement learning (RL) have relaxed structural assumptions while enabling real-time operation with moderate resources 
\cite{Hwangbo2017ControlOA}. 
In \cite{Molchanov2019}, the authors 
train
a single hovering policy for deployment across a range of quadrotors, generalizing satisfactorily to moving  references.
Meanwhile,
massively parallel training of quadrupedal walking policies from high-dimensional observations enabled startling robustness to uneven terrain \cite{pmlr-v164-rudin22a}, and learned controllers  augmented with adaptive feedforward compensation have been shown to reject large disturbances \cite{Huang23}. 
Unfortunately, these benefits come at a price:
RL tends to scale poorly with the size of the given Markov decision process (MDP), making it challenging to perform the exploration needed to discover high-performance policies.

To mitigate this burden, an RL agent should share experience across all those states that can be considered ``equivalent'' with respect to the reward and dynamics.
Indeed, robotic systems enjoy substantial symmetry \cite{Murray1997, Ostrowski1999, OrdonezApraez2023}, which has been thoroughly exploited in analytical control design \cite{Hatton2022, Welde2023, Hampsey2023quadrotor} and optimization \cite{Teng2022}.
In fact, many learned controllers have leveraged symmetry in an ad hoc or approximate manner (\textit{e.g.}, penalizing the \textit{error} between actual and reference states \cite{Molchanov2019} or working in the body frame \cite{Huang23}). 
More formally, the optimal policy of an MDP with symmetry is equivariant (and its value function is invariant) \cite{pmlr-v164-wang22j}, and neural architectures can be designed accordingly to improve sample efficiency and generalization \cite{vanderPol2020}.

Instead of incorporating symmetry into the network architecture, \cite{Ravindran2004} proposed ``MDP homomorphisms'', which establish a mapping from the given MDP to one of lower dimension. There, a policy may be trained more easily (using standard tools) and then lifted back to the original setting. Such methods were originally restricted to discrete state and action spaces, necessitating coarse discretization of robotic tasks (which are naturally described on smooth manifolds).
\cite{Yu2023} explored related ideas in continuous state and action spaces, but assumed deterministic dynamics (whereas stochasticity is fundamental to many tasks). However, \cite{Panangaden2024} recently extended the theory of homomorphisms of stochastic MDPs to the continuous setting,  recovering analogous value equivalence and policy lifting results. They also {learned} {approximate} homomorphisms from data, but do not give
a sufficient condition
to construct a well-behaved homomorphism (\textit{i.e.}, for which the new state and action spaces are also smooth manifolds) from a continuous symmetry known \textit{a priori} (as is the case for free-flying robotic systems \cite{Ostrowski1999}).

In this work, we explore the role of the {continuous} symmetries of free-flying robotic systems
in learned tracking control. After reviewing mathematical preliminaries in Sec. \ref{section:background}, in Sec. \ref{section:tracking_control_symmetry} we cast a general tracking control problem as a continuous MDP, using a stochastic process to model the (\textit{a priori} unknown) reference trajectory. We show that this MDP inherits the symmetry enjoyed by the underlying dynamics and running costs, and in Sec. \ref{section:mdp_homomorphisms} we
\ifICRA
prove
\else
show
\fi
that such symmetries can be used to construct an MDP homomorphism, reducing the dimensionality. 
\ifICRA
\else
Unfortunately, the proofs are relegated to a forthcoming preprint due to space constraints.
\fi
\ifICRA
In Sec. \ref{section:application}, we formally apply this method to three physical systems (including aerial and space robots).
\else
In Sec. \ref{section:application}, we briefly describe three example systems.
\fi
Finally, in  Sec. \ref{section:experiments} we use these tools to learn tracking controllers for the example systems, accelerating training, improving tracking accuracy, and generalizing zero-shot to new trajectories. We discuss our results and contributions in Secs. \ref{section:discussion}-\ref{section:conclusion}. Ultimately, these insights will facilitate the efficient development of accurate tracking controllers for various robotic systems.

\section{Background and Preliminaries}
\label{section:background}

We now introduce some mathematical concepts.  
$\mathsf{B}({\manifold{X}})$ denotes the Borel {$\sigma$-algebra} of $\manifold{X}$, and 
$\Delta(\manifold{X})$ denotes the set of Borel probability measures on $\manifold{X}$
(see \cite[Appx. B]{Panangaden2024}).
Throughout the paper, we largely follow the treatment of \cite{Panangaden2024}, which (along with their prior work \cite{RezaeiShoshtari2022}) extends \cite{Ravindran2004} to study homomorphisms of Markov decision processes with continuous (\textit{i.e.}, not discrete) state and action spaces.

\begin{definition}[see \cite{Panangaden2024}]
	A \textit{continuous Markov decision process}\footnote{
 The more general definition in \cite{Panangaden2024} does \textit{not} assume $\manifold{S}$ and $\mathcal{A}$ are smooth manifolds, nor that $\tau( \,\cdot\,|\, s,a)$ is a Borel measure, but this is all we need.
 } (\textit{i.e.}, an MDP) is a tuple ${\manifold{M} = (\manifold{S},\manifold{A},\reward,\transition,\gamma)}$, where:
\begin{itemize}
\setlength{\itemindent}{-.5em}
	\item the \textit{state space} ${\manifold{S}}$ is a smooth manifold,
\item the \textit{action space} ${\manifold{A}}$ is a smooth manifold,
\item the \textit{instantaneous reward} is ${\reward : \manifold{S} \times \manifold{A} \to \mathbb{R}}$,
\item the \textit{transition dynamics} are ${{\transition} : \manifold{S} \times \manifold{A}  \to \Delta(\manifold{S})}$, and 
\item the \textit{discount factor} $\gamma$ is a value in the interval $[0,1)$.
	\end{itemize}	
After taking action ${a_t}$ from state ${s_t}$,
the probability that $s_{t+1}$ is contained in a set ${B \in \mathsf{B}(\manifold{S})}$ 
is given by $\tau( B \,|\, s_t,a_t)$.
A \textit{policy} for $\manifold{M}$ is a map 
${\pi : \manifold{S} \to \Delta(\manifold{A})}$.
The \textit{action-value function} $		Q^\pi : \manifold{S} \times \manifold{A} \to \mathbb{R}$ of a given policy $\pi$ is defined by
\begin{equation}
	\begin{gathered}
		Q^\pi(s,a) := 
		\mathop{
		\mathlarger{\mathbb{E}}
		}_{\transition \sim \pi}
		\left[\,
			\sum_{t=0}^\infty
			\gamma^t \reward(s_t,a_t)	
			\, \Big| \,
			s_0 = s, a_0 = a 
			\,
		\right],
	\end{gathered}
\end{equation}
where  ${\transition \sim \pi}$ denotes the expectation over both the transitions and the policy (\textit{i.e.},
${s_{t+1} \sim \transition(\, \cdot \,|\, s_t, a_t)}$ and ${a_t \sim \pi(\, \cdot \,|\, s_t)}$ for all $t \in \mathbb{N}$).
A policy $\pi^*$ is \textit{optimal} if, for all ${s \in \manifold{S}}$, 
\begin{equation}
	\pi^* = \arg \max_\pi 
		\mathop{\mathlarger{\mathbb{E}}}
		_{\transition \sim \pi}
		\left[
		\,
		\sum_{t=0}^\infty
		\gamma^t R(s_t,a_t)	
			\, \Big| \,
			s_0 = s 
		\,
		\right].
\end{equation}
\end{definition}

\subsection{Homomorphisms of Markov Decision Processes}

The following notion describes a powerful link between two continuous MDPs of (perhaps) different dimensions.

\begin{definition}[see {\cite[Defs. 11 and 14]{Panangaden2024}}]
A pair of maps
	 ${p : \manifold{S} \to \widetilde{\manifold{S}}}$ and
	 	 ${h : \manifold{S} \times \manifold{A} \to \widetilde{\manifold{A}}}$
	 	 is called a \textit{continuous MDP homomorphism}
	 	 from ${\manifold{M} = (\manifold{S},\manifold{A},\reward,\transition,\gamma)}$ to ${\widetilde{\manifold{M}} = (\widetilde{\manifold{S}},\widetilde{\manifold{A}},\widetilde{\reward},\widetilde{\transition},\gamma)}$
	 	 if
	 	 	$p$ and, for each ${s \in \manifold{S}}$,
the map	 ${h_s : a \mapsto h(s,a)}$ are measurable,  surjective maps, such that
 \begin{subequations}
 			\begin{align}
 		\reward\big(s,a\big) &= \widetilde{\reward}\big(p(s),h(s,a)\big),
 		\label{reward_invariance_mdp_homomorphism}
 		\\
 		 			{\transition} \big(p^{-1}(\widetilde{B}) \,|\, s,a\big)
&=
\widetilde{\transition}\big(\widetilde{B} \,|\, p(s),h(s,a)\big)
 		\label{transition_invariance_mdp_homomorphism}
 	\end{align}
 \end{subequations}	 
  for all ${s \in \manifold{S}}$, ${a \in \manifold{A}}$, and ${\widetilde{B} \in \mathsf{B}({\widetilde{S}}})$.
Given a continuous MDP homomorphism $(p,h)$, a policy  $\widetilde{\pi}$  for  $\widetilde{\manifold{M}}$, and a policy $\pi$ for  $\manifold{M}$, 
$\pi$ is called a \textit{lift of $\widetilde{\pi}$} if
for all 
${s \in \manifold{S}}$ and ${A \in \mathsf{B}(\manifold{A})}$,
\begin{align}
   \pi \big(h_s^{-1}(\widetilde{A}) \,|\, s\big) 
= \widetilde{\pi}\big(\widetilde{A} \,|\, p(s) \big).
    \end{align}
\end{definition}

Subsequently, we often omit the word ``continuous'' for brevity.
MDP homomorphisms 
facilitate the synthesis of an optimal policy for the original MDP $\mathcal{M}$ from an optimal policy for the ``quotient'' MDP $\widetilde{\mathcal{M}}$, via the following theorem.

\begin{theorem}[see {\cite[Thms. 12 and 16]{Panangaden2024}}]
    Suppose $(p,h)$ is an MDP homomorphism from $\manifold{M}$ to $\widetilde{\manifold{M}}$ and   
    $\pi$ is a lift of {\normalfont any} policy $\widetilde{\pi}$ for $\widetilde{\manifold{M}}$.
    Then,
${		Q^\pi(s,a) = \widetilde{Q}^{\widetilde{\pi}}\big(p(s),h(s,a)\big). }$
Moreover, if $\widetilde{\pi}$ is optimal for $\widetilde{\manifold{M}}$, then $\pi$ is optimal for $\manifold{M}$.
\label{mdp_homomorphism_value_equivalence_and_optimality}
\end{theorem}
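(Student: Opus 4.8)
The plan is to characterize each action-value function as the unique fixed point of its Bellman evaluation operator and then show that the candidate $\bar{Q}(s,a) := \widetilde{Q}^{\widetilde{\pi}}\big(p(s),h(s,a)\big)$ satisfies the Bellman equation for $\pi$ on $\manifold{M}$, so that $\bar Q = Q^\pi$ by uniqueness. Concretely, since $\gamma \in [0,1)$, the operator $(T^\pi Q)(s,a) = \reward(s,a) + \gamma \int_{\manifold{S}} \int_{\manifold{A}} Q(s',a')\, \pi(\diff a' \mid s')\, \transition(\diff s' \mid s,a)$ is a $\gamma$-contraction on bounded measurable functions whose unique fixed point is $Q^\pi$, and the analogous statement holds for $\widetilde{T}^{\widetilde{\pi}}$ and $\widetilde{Q}^{\widetilde{\pi}}$. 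It therefore suffices to verify $T^\pi \bar Q = \bar Q$.

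The computation rests on reading \eqref{transition_invariance_mdp_homomorphism} and the lift condition as pushforward identities: \eqref{transition_invariance_mdp_homomorphism} says $p_* \transition(\,\cdot \mid s,a) = \widetilde{\transition}(\,\cdot \mid p(s),h(s,a))$, while the lift condition says $(h_{s})_* \pi(\,\cdot \mid s) = \widetilde{\pi}(\,\cdot \mid p(s))$. First I would evaluate $(T^\pi \bar Q)(s,a)$, replacing $\reward(s,a)$ by $\widetilde{\reward}\big(p(s),h(s,a)\big)$ via \eqref{reward_invariance_mdp_homomorphism}. For the expectation term, I would push the inner $a'$-integral through $h_{s'}$: because $\bar Q(s',a')$ depends on $a'$ only through $h_{s'}(a')$, the change-of-variables formula for pushforwards gives $\int_{\manifold{A}} \bar Q(s',a')\, \pi(\diff a' \mid s') = \int_{\widetilde{\manifold{A}}} \widetilde{Q}^{\widetilde{\pi}}\big(p(s'),\widetilde{a}'\big)\, \widetilde{\pi}(\diff \widetilde{a}' \mid p(s'))$. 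This inner integral depends on $s'$ only through $p(s')$, so pushing the outer $s'$-integral through $p$ and applying transition invariance converts it into the corresponding expectation against $\widetilde{\transition}(\,\cdot \mid p(s),h(s,a))$. Assembling the pieces reproduces exactly $\big(\widetilde{T}^{\widetilde{\pi}}\widetilde{Q}^{\widetilde{\pi}}\big)\big(p(s),h(s,a)\big) = \widetilde{Q}^{\widetilde{\pi}}\big(p(s),h(s,a)\big) = \bar Q(s,a)$, which establishes $T^\pi \bar Q = \bar Q$ and hence the value equivalence.

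For the optimality claim, I would run the same fixed-point argument one level up, now for the optimal Bellman operator. Defining $\bar Q^*(s,a) := \widetilde{Q}^*\big(p(s),h(s,a)\big)$, the reward and transition substitutions proceed as before, but the inner maximization over actions must be transported across $h_{s'}$. Here surjectivity of each $h_{s'}$ is essential: since $h_{s'}(\manifold{A}) = \widetilde{\manifold{A}}$, we get $\sup_{\widetilde{a}'} \widetilde{Q}^*\big(p(s'),\widetilde{a}'\big) = \sup_{a'} \widetilde{Q}^*\big(p(s'),h_{s'}(a')\big) = \sup_{a'} \bar Q^*(s',a')$, so $\bar Q^*$ satisfies the optimal Bellman equation on $\manifold{M}$ and therefore equals $Q^*$. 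Combining with the value equivalence, a lift $\pi$ of an optimal $\widetilde{\pi}^*$ satisfies $Q^\pi(s,a) = \widetilde{Q}^{\widetilde{\pi}^*}\big(p(s),h(s,a)\big) = \widetilde{Q}^*\big(p(s),h(s,a)\big) = Q^*(s,a)$, so $\pi$ attains the optimal value everywhere and is optimal for $\manifold{M}$.

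The main obstacle I anticipate is measure-theoretic rather than conceptual: justifying the two pushforward steps rigorously in the continuous setting requires that $p$, the maps $h_s$, the value functions, and the resulting integrands all be suitably measurable, and that the iterated integrals be well-defined (e.g.\ via boundedness of $\reward$ together with a Fubini-type argument). The surjectivity-driven interchange of the supremum with $h_{s'}$ in the optimality step likewise needs care to preserve measurability of $s' \mapsto \sup_{a'}\bar Q^*(s',a')$; this is precisely where the hypotheses that each $h_s$ is measurable and surjective do their work.
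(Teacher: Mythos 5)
A preliminary note: the paper never proves this theorem itself — it is imported verbatim from \cite[Thms.~12 and 16]{Panangaden2024} — so there is no in-paper proof to compare against; your attempt must be judged on its own merits. On those merits, your value-equivalence half is sound: characterizing $Q^\pi$ as the unique fixed point of the $\gamma$-contraction $T^\pi$, reading \eqref{transition_invariance_mdp_homomorphism} and the lift condition as pushforward identities, and checking that $\bar{Q} = \widetilde{Q}^{\widetilde{\pi}}\circ(p,h)$ is fixed by $T^\pi$ is a correct and standard mechanism, as is your identification $Q^* = \widetilde{Q}^*\circ(p,h)$ via the optimal Bellman operator and surjectivity of $h_{s'}$.

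The optimality half, however, has a genuine gap at its final step: from $Q^\pi = Q^*$ you conclude that ``$\pi$ attains the optimal value everywhere and is optimal.'' That implication is false in general, because $Q^\pi(s,a)$ conditions on the first action being $a$ and is therefore blind to the distribution $\pi(\,\cdot\,|\,s)$ at $s$ itself. Concretely: take two states where every action leads from $s_0$ to an absorbing state $s_1$, all rewards at $s_1$ vanish, and $\reward(s_0,\cdot)$ is non-constant. Then $Q^\pi(s_0,a) = \reward(s_0,a) = Q^*(s_0,a)$ and $Q^\pi(s_1,\cdot) = 0 = Q^*(s_1,\cdot)$ for \emph{every} policy $\pi$, yet a policy choosing a reward-minimizing action at $s_0$ satisfies $V^\pi(s_0) < V^*(s_0)$ and is not optimal. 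Since the paper's definition of optimality is a statement about state values (expected return from every $s_0 = s$), equality of action-value functions alone cannot close the argument.

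The repair is short and uses only ingredients you already have in hand. Compute the state value of the lifted policy directly:
\begin{align*}
V^\pi(s) &= \int_{\manifold{A}} Q^\pi(s,a)\, \pi(\mathrm{d}a \,|\, s)
= \int_{\manifold{A}} \widetilde{Q}^{*}\big(p(s), h_s(a)\big)\, \pi(\mathrm{d}a \,|\, s)
\\
&= \int_{\widetilde{\manifold{A}}} \widetilde{Q}^{*}\big(p(s), \widetilde{a}\big)\, \widetilde{\pi}\big(\mathrm{d}\widetilde{a} \,\big|\, p(s)\big)
= \widetilde{V}^{\widetilde{\pi}}\big(p(s)\big) = \widetilde{V}^{*}\big(p(s)\big),
\end{align*}
where the second equality is your value equivalence combined with $\widetilde{Q}^{\widetilde{\pi}} = \widetilde{Q}^*$, the third is the lift pushforward (the integrand depends on $a$ only through $h_s(a)$), and the last is optimality of $\widetilde{\pi}$. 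Separately, your fixed-point argument gives $Q^*(s,a) = \widetilde{Q}^*\big(p(s),h(s,a)\big)$, whence $V^*(s) = \sup_a Q^*(s,a) = \sup_{\widetilde{a}} \widetilde{Q}^*\big(p(s),\widetilde{a}\big) = \widetilde{V}^*\big(p(s)\big)$ by surjectivity of $h_s$. Combining the two displays yields $V^\pi = V^*$ everywhere, which is exactly what optimality of $\pi$ requires. With this one additional step (and the measure-theoretic regularity you already flagged), your proof is complete.
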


\subsection{Lie Group Symmetries of Markov Decision Processes}

 A \textit{(left) group action} of a Lie group $\manifold{G}$ on a smooth manifold $\manifold{X}$ is a smooth map ${\Phi : \manifold{G} \times \manifold{X} \to \manifold{X}}$
(often written ${\Phi_g(x) := \Phi(g,x)}$ for brevity)
 such that for all ${x \in \manifold{X}}$ and ${g, h \in \manifold{G}}$, ${\Phi(\identity_{\manifold{G}},x) = x}$ (where ${\identity_{\manifold{G}}	 \in \manifold{G}}$ is the identity) and 
${\Phi\big(g,\Phi(h,x)\big) = \Phi\big(g h,x)}$.
The $\Phi$-\textit{orbit} of $x$ is the set ${\Phi_G(x) := \{\Phi_g(x) : g \in \manifold{G}\}}$, while  
$\manifold{X} / \manifold{G}$ is a set whose elements are all the orbits of $\Phi$.
An action $\Phi$ is \textit{proper} if the map $(g,x) \mapsto \big(\Phi_g(x),x\big)$ is proper (\textit{i.e.}, the preimage of any compact set is compact%
), 
and
\textit{free} if ${\Phi_g(x) = x}$ implies ${g = \identity_{\manifold{G}}}$.
A group $\manifold{G}$ acts on itself via ${L : (h,g) \mapsto hg}$.

A group action can describe a symmetry of some object defined on the manifold. We now formulate the following definition of a Lie group symmetry of a continuous MDP.

\begin{definition}
	 Given an MDP 
	 	${\manifold{M} = (\manifold{S},\manifold{A},\reward,\transition,\gamma)}$,
	 	a pair of Lie group actions 
	 	${(\Phi, \Psi)}$ of  $\manifold{G}$ on $\manifold{S}$ and $\manifold{A}$ respectively
	 	is called a \textit{Lie group symmetry} of $\manifold{M}$
	 if,
 	for all ${\Phi}$-invariant sets ${{B} \in \mathsf{B}(\manifold{S})}$ and all
	${s \in \manifold{S}}$, ${a \in \manifold{A}}$, and ${g \in \manifold{G}}$, we have
\begin{subequations}
	\begin{align}
		R\big(s,a\big) &= {R}\big(\Phi_g(s),\Psi_g(a)\big), \label{reward_invariance_lie_group_symmetry}
		\\
		{\transition}\big(B \,|\, s,a\big) &= {\transition}\big(B \,|\, \Phi_g(s),\Psi_g(a)\big). 
		\label{transition_invariance_lie_group_symmetry}
	\end{align}
\end{subequations}
\end{definition}

\begin{remark}
    The qualifier ``$\Phi$-invariant'' on $B$ broadens the class of symmetries considered (and is more general than \cite{pmlr-v164-wang22j} and \cite{vanderPol2020}, as noted in \cite[Def. 35]{Zhao2022thesis}). 
    The deterministic case (\textit{i.e.}, when $\transition(\, \cdot \, | \, s_t,a_t)$ is the Dirac measure corresponding to ${\{s_{t+1}\} \subseteq \manifold{S}}$) gives the intuition, since then  \eqref{transition_invariance_lie_group_symmetry} requires the image of any orbit in ${\mathcal{S} \times \mathcal{A}}$ to lie within some orbit in $\mathcal{S}$, \textit{without} enforcing equivariance \textit{within} each orbit. 
\end{remark}

\section{Tracking Control Problems \\ with Lie Group Symmetries}

\label{section:tracking_control_symmetry}

In this section, we formulate a general trajectory tracking problem as an MDP that models the evolution of both the physical and reference systems. We give a sufficient condition for this MDP to have a Lie group symmetry that  will be used (in Sec. \ref{section_MDPH}) to reduce the problem size.

\begin{definition}
	A \textit{tracking control problem} is a tuple  ${\manifold{T} = (\manifold{X}, \manifold{U}, \dynamics, J_{\manifold{X}}, J_{\manifold{U}}, \rho, \gamma)}$, where:
	\begin{itemize}
		\item $\manifold{X}$ is the \textit{physical state space} (a smooth manifold),
		\item $\manifold{U}$ is the \textit{physical action space} (a smooth manifold),
		\item ${\dynamics : \manifold{X} \times \manifold{U} \to \Delta(\manifold{X})}$ is the the \textit{physical dynamics} (\textit{i.e.}, ${x}_{t+1} \sim \dynamics( \, \cdot \, | \, x_t,u_t)$ describes the system's evolution),
		\item  ${J_{\manifold{X}} : \manifold{X} \times \manifold{X} \to \mathbb{R}}$ is the \textit{tracking cost}, \item ${J_{\manifold{U}} : \manifold{U} \times \manifold{U} \to \mathbb{R}}$ is the \textit{effort cost}, 
				\item  $\rho \in \Delta(\manifold{U})$ is the \textit{reference action distribution}, and
		\item ${\gamma \in [0,1)}$ is the \textit{discount factor}.
	\end{itemize}
\end{definition}

The distribution $\rho$ is not usually included in the definition of a tracking problem but will play an essential role in our approach (see Remark \ref{justifying_mdp_formulation}). 
\ifICRA
Going forward, we will use the following system as a running example to illustrate the theoretical concepts and the impact of a symmetry-informed approach (even for a simple system).

\begin{example}[name={\normalfont\texttt{Particle}\itshape},label=example:particle]
\label{particle_example}
	Consider a particle in $\mathbb{R}^3$  with mass $m$
 subject to a controlled external force (sometimes used as a reduced-order model for a quadrotor or rocket \cite{Huang23}). 
	The state ${x = (r,v) \in \manifold{X} = T\mathbb{R}^3 \simeq \mathbb{R}^3 \times \mathbb{R}^3}$ consists of the particle's position and velocity, and the control input is the applied force ${u \in \manifold{U} = \mathbb{R}^3}$. 
	The (deterministic) equations of motion, when discretized with timestep $\mathrm{dt}$,  are given by
	\begin{align}
		r_{t+1} = r_t + v_t \dt, \quad v_{t+1} = v_t + \tfrac{1}{m} u_t \dt,
	\end{align}
	so the transition probabilities ${f : T\mathbb{R}^3 \times \mathbb{R}^3 \to \Delta(T\mathbb{R}^3)}$ are
	\begin{equation}
		\hspace{-1pt}
		f( B \,|\, x,u) := 
		\begin{cases}
			1, \, (r + v \dt, v + \tfrac{1}{m} u \dt) \in B,
			\\ 0, \, \textrm{otherwise}.
		\end{cases}
		\hspace{-1pt}
	\end{equation}
	For some ${c_r,c_v,c_u \geq 0}$, we define the running costs
	\begin{subequations}
 \label{point_particle_running_costs}
		\begin{align}
      			J_{T\mathbb{R}^3}
   \big((r,v),(r^{\mathrm{d}},v^{\mathrm{d}})\big)
   &:= \alpha(r - r^{\mathrm{d}}) +  c_v \norm{v - v^{\mathrm{d}}},
\label{point_particle_tracking_cost}
			\\
			J_{\mathbb{R}^3}(u,u^{\mathrm{d}}) &:= c_u \norm{u - u^{\mathrm{d}}},
\label{point_particle_effort_cost}
		\end{align}
  \end{subequations}
  where 
$
{\alpha(y) := c_r\norm{y} +\tanh(a_r\norm{y}) - 1}$.
	Selecting a covariance $\Sigma$ and a discount factor ${0 \leq \gamma < 1}$, we define the tracking problem 
	${\manifold{T} = \big(T\mathbb{R}^3, \mathbb{R}^3, \dynamics, J_{T\mathbb{R}^3}, J_{\mathbb{R}^3}, \mathcal{N}(0,\Sigma), \gamma\big)}$.

\end{example}
\fi

\subsection{Modeling a Tracking Control Problem as an MDP}

We choose to model the trajectory tracking task for \textit{a priori} unknown reference trajectories 
 in the following manner. 

\begin{definition} \label{tracking_mdp_definition}
	A given tracking control problem ${\manifold{T} =}$ ${ \big(\manifold{X}, \manifold{U}, \dynamics, J_{\manifold{X}}, J_{\manifold{U}},  \rho, \gamma\big)}$ induces  a \textit{tracking control MDP}
	 given by ${\manifold{M}_{\manifold{T}} = ({\manifold{S} = \manifold{X} \times \manifold{X} \times \manifold{U}},\manifold{A} = \manifold{U},R,\transition,\gamma)}$, where:
\begin{itemize}
    \item the state is ${(x,x^{\mathrm{d}},u^{\mathrm{d}})}$, where 
    ${x}$, ${x^{\mathrm{d}} \in \manifold{X}}$ are the \textit{actual} and \textit{reference states} and ${ u^{\mathrm{d}} \in \manifold{U}}$ is the \textit{reference action},
    \item the {actions} are ${a = u \in \manifold{U}}$ (\textit{i.e.}, the \textit{actual action}),	 		  \item  the instantaneous reward 
	 		$						{{R} : \manifold{S} \times \manifold{A} \to \mathbb{R}}
	 		$
	 		is given by
	 				\begin{equation}
	 					\label{tracking_mdp_reward}
	 					{R\big((x,x^{\mathrm{d}},u^{\mathrm{d}}),u\big) := -J_{\manifold{X}}(x,x^{\mathrm{d}}) - J_{\manifold{U}}(u,u^{\mathrm{d}})},	 				\end{equation}
    \item and the transitions $				{{\transition} : \manifold{S} \times \manifold{A} \to \Delta(\manifold{S})}$ are defined by
\begin{gather}
\begin{gathered}
        x_{t+1} \sim f(\, \cdot \, |\, x_t, u_t),  
    \\ 
    x^{\mathrm{d}}_{t+1} \sim f(\, \cdot \, |\, x^{\mathrm{d}}_t, u^{\mathrm{d}}_t), \quad 
    u^{\mathrm{d}}_{t+1} \sim \rho.
\end{gathered}
    \label{transition_dynamics_sampled_from_version}
\end{gather}
\end{itemize}

\end{definition}

\begin{remark}
\label{justifying_mdp_formulation}
     This formulation allows us to model a tracking control problem over a broad class of reference trajectories (\textit{i.e.}, those generated by a certain stochastic process) as a single \textit{stationary} MDP (\textit{i.e.}, with time-invariant transitions and reward). While we could also formulate a (\textit{non-stationary}) MDP 
    corresponding to a \textit{particular} reference trajectory by making the tracking cost a function of time $t$ and the actual state $x$, an optimal policy for that MDP would be useless for tracking \textit{other} references.
 In {Sec. \ref{section:experiments}}, we will show empirically that policies trained in the proposed manner 
  also effectively track pre-planned reference trajectories, for which the sequence of reference actions ${\{u^{\mathrm{d}}_0, u^{\mathrm{d}}_1, u^{\mathrm{d}}_2, \cdots\}}$ is chosen to induce a pre-selected state trajectory ${\{x^{\mathrm{d}}_0, x^{\mathrm{d}}_1, x^{\mathrm{d}}_2, \cdots\}}$.
\end{remark}

\ifICRA
\begin{example}[name={\normalfont\texttt{Particle}\itshape },continues=example:particle]
	Following \eqref{transition_dynamics_sampled_from_version},
	the dynamics of $\manifold{M}_\manifold{T}$ for \texttt{Particle}
	can be expressed as%
	\begin{subequations}
 \label{point_particle_tracking_MDP_dynamics}
		\begin{align}
  \label{point_particle_tracking_MDP_dynamics_actual}
			r_{t+1} &= r_t + v_t \dt,
			\quad
			\hspace{2.4pt} v_{t+1} = v_t + \tfrac{1}{m} u_t \dt,
			\\
  \label{point_particle_tracking_MDP_dynamics_reference}
			r_{t+1}^{\mathrm{d}} &= r_t^{\mathrm{d}} + v_t^{\mathrm{d}} \dt,
			\quad
			v_{t+1}^{\mathrm{d}} = v_t^{\mathrm{d}} + \tfrac{1}{m} u_t^{\mathrm{d}} \dt,
			\\
			u_{t+1}^{\mathrm{d}} &\sim \mathcal{N}(0,\Sigma),
  \label{point_particle_tracking_MDP_dynamics_actions}
		\end{align}
	\end{subequations}
		where ${\big((r,v),(r^{\mathrm{d}}, v^{\mathrm{d}}), u^{\mathrm{d}}\big) \in \mathcal{S} = T\mathbb{R}^3 \times T\mathbb{R}^3 \times \mathbb{R}^3}$ and $u \in \mathcal{A} = \mathbb{R}^3$. From \eqref{tracking_mdp_reward}, the reward is given by
		\begin{equation}
			R(s,a)  = - \alpha({r - r^{\mathrm{d}}}) - c_v \hspace{.5pt}\norm{v - v^{\mathrm{d}}} 
   - c_u \hspace{.5pt} \norm{u - u^{\mathrm{d}}}.
			\label{point_particle_reward}
		\end{equation}
	
\end{example}
\fi

\subsection{Symmetries of Tracking Control MDPs}

We now show that the MDP induced by a tracking control problem with certain symmetries will inherit a related symmetry with certain convenient properties.

\begin{theorem}
	\label{tracking_problem_symmetry}
	Consider
	a tracking control problem
	${\manifold{T} = (\manifold{X}, \manifold{U}, \dynamics, J_{\manifold{X}}, J_{\manifold{U}}, \rho, \gamma)}$
	as well as Lie group actions 
		${\Upsilon: \manifold{K} \times \manifold{X} \to \manifold{X}}$ and ${\Theta: \manifold{H} \times \manifold{U} \to \manifold{U}}$. Suppose
that:
	\begin{itemize}
		\item 
		$J_{\manifold{X}}$ is $\Upsilon$-invariant and $J_{\manifold{U}}$ is $\Theta$-invariant, {\normalfont i.e.},  
				for all ${x,x^{\mathrm{d}} \in \manifold{X}}$, ${u,u^{\mathrm{d}} \in \manifold{U}}$, $k \in \manifold{K}$, and ${h \in \manifold{H}}$, we have
		\begin{subequations}
  \label{running_cost_invariance}
			\begin{align}
				J_{\manifold{X}}(x,x^{\mathrm{d}}) &= J_{\manifold{X}}\big(\Upsilon_k (x), \Upsilon_k (x^{\mathrm{d}})\big),
				\label{tracking_cost_invariance}
				\\ 
				J_{\manifold{U}}(u,u^{\mathrm{d}}) &= J_{\manifold{U}}\big(\Theta_h (u), \Theta_h (u^{\mathrm{d}})\big).
				\label{effort_cost_invariance}
			\end{align}
		\end{subequations}
			\item For each ${(k,h) \in \manifold{K} \times \manifold{H}}$, there exists ${{k^\prime} \in \manifold{K}}$ such that
			for all $(x,u) \in \manifold{X} \times \manifold{U}$ and $B \in \mathsf{B}(\manifold{X})$, we have
			\begin{equation}
				\dynamics\big(
				\Upsilon_{{k^\prime}}(B) \,|\, x, u
				\big)
				=
				\dynamics\big(
				B  \,|\, \Upsilon_{k}(x), \Psi_{h}(u)	
				\big).
				\label{physical_system_invariance}
			\end{equation}
		\end{itemize} 
			Define actions of the direct product group ${\manifold{G} = \manifold{K} \times \manifold{H}}$ on ${\manifold{S} = \manifold{X} \times \manifold{X} \times \manifold{U}}$ and ${\manifold{A} = \manifold{U}}$, given respectively by%
			\begin{subequations}
\label{combined_group_action_definition}
\begin{gather}
						\label{combined_group_action_definition_state}
						\Phi_{(k,h)}(x,x^{\mathrm{d}},u^{\mathrm{d}}) := \big(\Upsilon_k(x), \Upsilon_k(x^{\mathrm{d}}), \Theta_h(u^{\mathrm{d}})\big),
						\\
						\Psi_{(k,h)}(u) := \Theta_h(u).
				\label{combined_group_action_definition_actions}					\end{gather}
			\end{subequations}
			Then, $(\Phi, \Psi)$ is a Lie group symmetry of $\manifold{M}_{\manifold{T}}$. Moreover, if $\Upsilon$ and $\Theta$ are free and proper, then $\Phi$ is also free and proper.

			\end{theorem}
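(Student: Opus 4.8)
The plan is to verify the two defining conditions of a Lie group symmetry for the product action $(\Phi,\Psi)$ — namely reward invariance \eqref{reward_invariance_lie_group_symmetry} and transition invariance \eqref{transition_invariance_lie_group_symmetry} — and then to establish freeness and properness separately. Reward invariance is immediate: substituting the definitions \eqref{combined_group_action_definition_state}--\eqref{combined_group_action_definition_actions} into the reward \eqref{tracking_mdp_reward}, the tracking cost becomes $J_{\manifold{X}}(\Upsilon_k(x),\Upsilon_k(x^{\mathrm{d}}))$ and the effort cost becomes $J_{\manifold{U}}(\Theta_h(u),\Theta_h(u^{\mathrm{d}}))$, since both the action $u$ and the reference action $u^{\mathrm{d}}$ are transformed by the same $\Theta_h$ while $x$ and $x^{\mathrm{d}}$ are transformed by the same $\Upsilon_k$. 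The hypotheses \eqref{tracking_cost_invariance} and \eqref{effort_cost_invariance} then return the original costs, so $\reward$ is $(\Phi,\Psi)$-invariant.

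The substance of the argument is transition invariance. First I would note that the kernel \eqref{transition_dynamics_sampled_from_version} factors as a product measure over the three state components, since the next actual state, next reference state, and next reference action are conditionally independent. Fixing $g=(k,h)$ and applying hypothesis \eqref{physical_system_invariance} twice — once at $(x,u)$ and once at $(x^{\mathrm{d}},u^{\mathrm{d}})$, with the \emph{same} witness $k^\prime$, which depends only on $(k,h)$ and not on the base point — I would show that the two actual-state factors of the transformed kernel $\transition(\,\cdot\,|\Phi_g(s),\Psi_g(a))$ are the pushforwards of $\dynamics(\,\cdot\,|x,u)$ and $\dynamics(\,\cdot\,|x^{\mathrm{d}},u^{\mathrm{d}})$ under $\Upsilon_{(k^\prime)^{-1}}$, while the reference-action factor is $\rho$ in both cases. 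Hence a successor sampled from $\transition(\,\cdot\,|\Phi_g(s),\Psi_g(a))$ has the same law as $\Phi_{((k^\prime)^{-1},\identity_{\manifold{H}})}$ applied to a successor sampled from $\transition(\,\cdot\,|s,a)$, the identity appearing in the $\manifold{H}$-factor precisely because the fresh reference action is untouched. Writing the probability of $B$ as the integral of its indicator and changing variables by this single group element reduces the claim to $\Phi_{((k^\prime)^{-1},\identity_{\manifold{H}})}^{-1}(B)=B$, which holds because $B$ is $\Phi$-invariant. This is exactly the point anticipated by the Remark on $\Phi$-invariance: since $k^\prime$ need not equal $k$ and the redrawn reference action cannot be made equivariant, only invariance under the \emph{full} orbit — not equivariance within it — is available, and the $\Phi$-invariant-set formulation is what lets the argument close.

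Finally, for freeness and properness I would argue componentwise. If $\Phi_{(k,h)}$ fixes $(x,x^{\mathrm{d}},u^{\mathrm{d}})$, then in particular $\Upsilon_k(x)=x$ and $\Theta_h(u^{\mathrm{d}})=u^{\mathrm{d}}$, so freeness of $\Upsilon$ and $\Theta$ forces $k=\identity_{\manifold{K}}$ and $h=\identity_{\manifold{H}}$, i.e., $g=\identity_{\manifold{G}}$. For properness I would use the sequential characterization valid on manifolds (second countable, locally compact, Hausdorff): given $s_n\to s$ in $\manifold{S}$ with $\Phi_{(k_n,h_n)}(s_n)$ convergent, the first component yields a convergent sequence $\Upsilon_{k_n}(x_n)$ over the convergent base $x_n\to x$, so properness of $\Upsilon$ gives a convergent subsequence of $(k_n)$; the third component does the same for $(h_n)$ via properness of $\Theta$; passing to a common subsequence makes $(k_n,h_n)$ convergent, which is properness of $\Phi$.

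I expect the transition-invariance step to be the main obstacle, chiefly the measure-theoretic bookkeeping of the pushforward and change of variables — in particular, verifying that it is $\Upsilon_{(k^\prime)^{-1}}$ rather than $\Upsilon_{k^\prime}$ that appears, and confirming that the product structure of $\transition$ together with the shared witness $k^\prime$ collapses the transformed kernel into a single application of a $\Phi$-group element that $\Phi$-invariance of $B$ can then absorb.
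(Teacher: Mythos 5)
Your proposal is correct, and its core is the paper's own argument: reward invariance by direct substitution of \eqref{combined_group_action_definition} into \eqref{tracking_mdp_reward} and appeal to \eqref{running_cost_invariance}; transition invariance via the product-measure factorization of \eqref{transition_dynamics_sampled_from_version}, applying \eqref{physical_system_invariance} with the \emph{same} witness $k^\prime$ to both physical-state factors while the $\rho$ factor is untouched, and then absorbing the residual group element $\big((k^\prime)^{-1},\identity_{\manifold{H}}\big)$ using the $\Phi$-invariance of $B$. You phrase this last step as a pushforward/change of variables, whereas the paper manipulates the sets directly via $B = \Phi_{((k^\prime)^{-1},\identity_{\manifold{H}})}(B)$; these are the same maneuver, and you correctly identify both the inverse (it is indeed $\Upsilon_{(k^\prime)^{-1}}$ that appears) and the reason the $\Phi$-invariant-set formulation of symmetry is what makes the argument close. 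Where you genuinely diverge is properness. The paper factors $\Phi$ as the product of the diagonal action $\Gamma_k = \Upsilon_k \times \Upsilon_k$ with $\Theta_h$ and proves a standalone appendix lemma that products and diagonals of free, proper actions are free and proper, using the proper-map characterization $(g,x) \mapsto \big(\Phi_g(x),x\big)$, stability of proper maps under products on locally compact Hausdorff spaces, and compactness of the sets $\manifold{G}_K^\Upsilon = \{g : K \cap \Upsilon_g(K) \neq \varnothing\}$. You instead apply the sequential characterization of properness (valid for Lie group actions on manifolds) directly to $\Phi$: the first state component controls $(k_n)$ via properness of $\Upsilon$, the third controls $(h_n)$ via properness of $\Theta$, and a common subsequence finishes; freeness is the same componentwise argument in both. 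Your route is more elementary and self-contained --- it needs no separate lemma, no facts about proper maps, and never even touches the second (reference-state) component --- while the paper's route buys a reusable general statement about product and diagonal actions at the cost of the appendix machinery. Both are valid proofs of the theorem as stated.
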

			
\ifICRA			
			\begin{proof}
				From \eqref{tracking_mdp_reward}, we compute the transformed reward as
				\begin{align}
					&{\reward}\big(\Phi_{(k,h)}(s),\Psi_{(k,h)}(a)\big) 
					\\ &= -J_{\manifold{X}}\big(\Upsilon_k(x),\Upsilon_k(x^{\mathrm{d}})\big) - J_{\manifold{U}}\big(\Theta_h(u),\Theta_h(u^{\mathrm{d}})\big)
					\\ &= -J_{\manifold{X}}(x,x^{\mathrm{d}}) - J_{\manifold{U}}(u,u^{\mathrm{d}})
					= R(s,a),
				\end{align}
				where we have substituted in \eqref{combined_group_action_definition} and simplified using
				\eqref{running_cost_invariance}. Thus, \eqref{reward_invariance_lie_group_symmetry} holds.
    Considering now the transitions, we note that \eqref{transition_dynamics_sampled_from_version} can also be written using the ``product measure'' as 
    \begin{equation}
	\label{tracking_mdp_dynamics}
					\transition\big(
	\cdot |\, 
	(x,x^{\mathrm{d}},u^{\mathrm{d}}),u
	\big) 
	:= 
	\dynamics(\, \cdot \, | \, x,u) \times 
	\dynamics(\, \cdot \, | \, x^{\mathrm{d}},u^{\mathrm{d}}) \times 
	\rho.
\end{equation}%
    We then apply \eqref{combined_group_action_definition} to \eqref{tracking_mdp_dynamics} to compute
				\begin{align}
					\nonumber
					&\transition\big(\cdot|\, \Phi_{(k,h)}(s), \Psi_{(k,h)}(a)\big) 
					\\ &= 
					\dynamics\big(\cdot|\,\Upsilon_k(x), \Theta_h(u)\big)
					\hspace{-.5pt}\times\hspace{-.5pt}
					\dynamics\big(\cdot|\,\Upsilon_k(x^{\mathrm{d}}), \Theta_h(u^{\mathrm{d}})\big)
					\hspace{-.5pt}\times\hspace{-.5pt}
					\rho
					\\ &= 
					\dynamics\big(\Upsilon_{{k^\prime}}(\,\cdot\,) |\,x,u\big)
					\times
					\dynamics\big(\Upsilon_{{k^\prime}}(\,\cdot\,) |\,x^{\mathrm{d}},u^{\mathrm{d}}\big)
					\times \rho,
					\label{switch_transitions_to_h}
				\end{align}
				where \eqref{switch_transitions_to_h} follows from \eqref{physical_system_invariance}.
				Considering any $\Phi$-invariant ${B \in \mathsf{B}(\manifold{S})}$, we note that ${B = \Phi_{{({k^\prime}{}^{-1},\identity_{\manifold{H}})}}(B)}$, and compute
				\begin{align}
					\nonumber
					&\transition
					\big(B \,|\, \Phi_{(k,h)}(s), \Psi_{(k,h)}(a)\big) 
					\\ &= \transition\big(\Phi_{{({k^\prime}{}^{-1},\identity_{\manifold{H}})}}(B) \,|\, \Phi_{(k,h)}(s), \Psi_{(k,h)}(a)\big) 
					\\ &= \big(
					\dynamics(\,\cdot\, |\,x,u)
					\times
					\dynamics(\,\cdot\, |\,x^{\mathrm{d}},u^{\mathrm{d}})
					\times  \rho
					\big)(B)
					= \transition\big(B \,|\, s,a\big),
					\label{transition_punch_line}
				\end{align}
				where \eqref{transition_punch_line} follows directly from \eqref{switch_transitions_to_h} and \eqref{combined_group_action_definition_state}. Thus, \eqref{transition_invariance_lie_group_symmetry} holds as well, and $(\Phi, \Psi)$ is a Lie group symmetry of $\manifold{M}_{\manifold{T}}$.
Assuming that $\Upsilon$ and $\Theta$ are free and proper,
it is readily verified that  $\Phi$ is free and proper after noting that
  $\Phi$ is the product action of $\Gamma$ and $\Theta$ (\textit{i.e.}, ${\Phi_{(k,h)} = \Gamma_k \times \Theta_h}$), where $\Gamma$ is the diagonal action of $\Upsilon$ (\textit{i.e.}, ${\Gamma_k = \Upsilon_k \times \Upsilon_k}$). 
			\end{proof}

   \fi

\begin{remark}
Because we do \textit{not} assume that ${k^\prime = k}$,
 	\eqref{physical_system_invariance} is more general than  equivariance of the transitions. However, $k^\prime$ must depend only on $k$ and $h$, and not on $x$ and $u$.
\end{remark}

\ifICRA
\begin{example}[name={\normalfont\texttt{Particle}\itshape },continues=example:particle]
	Considering the Lie groups ${\manifold{K} := T\mathbb{R}^3}$ (with the obvious group operation inherited from its identification with ${\mathbb{R}^3 \times \mathbb{R}^3}$) and ${\manifold{H} := \mathbb{R}^3}$, we let a $\manifold{K}$-action on ${\manifold{S}=T\mathbb{R}^3}$ and an $\manifold{H}$-action on ${\manifold{A} = \mathbb{R}^3}$ be given by the left action of the groups on themselves, \textit{i.e.},
	\begin{subequations}
 \label{point_particle_actions}
		\begin{align}
			\label{point_particle_x_action}
			\Upsilon_{k}(r,v) &:= L_{(k_1,k_2)}(r,v) = (r+k_1,v+k_2), 
			\\ 
			\label{point_particle_u_action}
			\Theta_{h}(u) &:= L_h(u) = u + h,
		\end{align}
	\end{subequations}
	which are free and proper.
	It is clear that
	the tracking and effort costs \eqref{point_particle_running_costs} are invariant to these actions,
	\textit{i.e.},  \eqref{running_cost_invariance} holds. Moreover, for any ${B \in \mathsf{B}(T\mathbb{R}^3)}$, 
	\begin{align}
 \nonumber
		&f\big(B \,|\,\Upsilon_k(x),\Theta_h(u)\big)
            \\ &
            =f\big(B\,|\,(r + k_1, v + k_2),u + h\big)
		\\ &=
		\begin{cases}
			1, \ 
			\begin{pmatrix}
				r + k_1 + (v + k_2) \dt
				\\
				v + k_2 + \tfrac{1}{m}(f + h)\dt 
			\end{pmatrix} \in B,
			\\ 0, \ \textrm{otherwise}.
		\end{cases}
		\\ &=
		f\big(\Upsilon_{{k^\prime}}(B) \,|\, x,u\big), \ \ 
		k^\prime = -(k_1,k_2) - (k_2,\tfrac{1}{m}h) \dt.
	\end{align}
	Thus, the transitions satisfy \eqref{physical_system_invariance}.
	In the manner of \eqref{combined_group_action_definition}, the group actions \eqref{point_particle_actions} induce actions of ${\manifold{G} = \manifold{K} \times \manifold{H} = T\mathbb{R}^3 \times \mathbb{R}^3} $ on $\mathcal{S}$ and $\mathcal{A}$, given by
	\begin{subequations}
 \label{point_particle_symmetry_direct_product}
		\begin{align}
			\Phi_{(k,h)}(s) &:= 
			(r,v,r^{\mathrm{d}}, v^{\mathrm{d}}, u^{\mathrm{d}}) + (k_1,k_2,k_1,k_2,h), 
                \label{point_particle_state_symmetry_direct_product}
			\\
			\Psi_{(k,h)}(a) &:= u^{\mathrm{d}} + h.
                \label{point_particle_action_symmetry_direct_product}
		\end{align}
	\end{subequations}
	Thus, by Theorem \ref{tracking_problem_symmetry}, $(\Phi,\Psi)$ is a Lie group symmetry of $\mathcal{M}_{\mathcal{T}}$ for the \texttt{Particle}, and moreover, $\Phi$ is free and proper.
\end{example}
\fi

\section{Continuous MDP Homomorphisms \\ Induced by Lie Group Symmetries}

\label{section:mdp_homomorphisms}

\label{section_MDPH}

We will use the following theorem to show that symmetries of a tracking control MDP can be used to reduce its dimension via a homomorphism and also give an explicit formula for policy lifting. 
Although related results are known in the discrete \cite{Ravindran2004} and deterministic \cite{Yu2023} settings, we require a more general result due to our continuous state and action spaces and the random sampling of the reference actions (even when the underlying dynamics $f$ are deterministic).

\begin{theorem}
	\label{symmetry_implies_homomorphism}
	Consider an MDP 
		 	${\manifold{M} = (\manifold{S},\manifold{A},\reward,\transition,\gamma)}$
	 with a Lie group symmetry $(\Phi,\Psi)$.
	Suppose that
		$\Phi$ is free and proper and 
		${\lambda : \manifold{S} \to 
			\manifold{G}}$ is 
   \ifICRA
   any\footnote{
    Since $\lambda$ need not be continuous, it can be constructed from a collection of \textit{local} trivializations of the principal $\mathcal{G}$-bundle ${p : \manifold{S} \to \manifold{S} / \manifold{G}}$ \cite[\S 9.9]{GallierII}.
}
\else
any
\fi   
   equivariant map.
 Define 
	\begin{subequations}
		\begin{gather}
			p : S \to \manifold{S}/\manifold{G}, \ s \mapsto \Phi_G(s),
			\label{state_map_mdp_homomorphism_theorem}
			\\
			h : \manifold{S} \times \manifold{A} \to \manifold{A}, \ (s,a) \mapsto \Psi_{\lambda(s)^{-1}}(a).
			\label{action_map_mdp_homomorphism_theorem}
		\end{gather}
	\end{subequations}
	Then, $(p,h)$ is an MDP homomorphism from $\manifold{M}$ to
	${
		\widetilde{\manifold{M}} = \big(\widetilde{\manifold{S}} = \manifold{S} / \manifold{G}, \widetilde{\manifold{A}} = \manifold{A} , \widetilde{R},\widetilde{\transition},\gamma\big)
	}$, where we define
	\begin{subequations}
		\begin{align}
			\widetilde{R}(\tilde{s},\tilde{a}) &:= R\big(s,\Psi_{\lambda(s)}
			(\tilde{a})\big) \,\big|\, {}_{s \, \in \,  p^{-1}(\tilde{s})},
			\label{reduced_reward_definition}
			\\
			\widetilde{\transition}(\widetilde{B} \,|\, \tilde{s},\tilde{a}) &:=  \transition\big(
			p^{-1}(\widetilde{B}) \,|\,
			s,\Psi_{\lambda(s)}(\tilde{a}) \big) \,\big|\, {}_{s \, \in \,  p^{-1}(\tilde{s})}
			\label{reduced_transition_dynamics_definition}
		\end{align}
	\end{subequations}
	 independent of the particular choice of $s$.
	Also, for any policy 
	${\widetilde{\pi}}$ 
	for $\widetilde{\manifold{M}}$, 
a policy for $\manifold{M}$
that is 
a lift of $\widetilde{\pi}$ is
	given by
	\begin{equation}
		(\widetilde{\pi})^\uparrow (A \,|\, s) := \widetilde{\pi}\big(
		\Psi_{\lambda(s)^{-1}}(A) \,|\, p(s)
		\big).
            \label{define_lifted_policy}
	\end{equation}

\end{theorem}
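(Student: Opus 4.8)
The plan is to begin with the structural groundwork. Since $\Phi$ is free and proper, the quotient manifold theorem guarantees that $\manifold{S}/\manifold{G}$ is a smooth manifold and that the orbit map $p$ of \eqref{state_map_mdp_homomorphism_theorem} is a smooth surjective submersion; this certifies that $\widetilde{\manifold{S}}$ is a legitimate state space and that $p$ is measurable and surjective. For each fixed $s$, the map $h_s = \Psi_{\lambda(s)^{-1}}$ of \eqref{action_map_mdp_homomorphism_theorem} is the action of a single group element, hence a diffeomorphism of $\manifold{A}$, and so is automatically measurable and surjective, as required of an MDP homomorphism. The one fact I would record up front is the equivariance of $\lambda$ relative to $\Phi$ and the left-translation action $L$ on $\manifold{G}$, namely $\lambda(\Phi_g(s)) = g\,\lambda(s)$; combined with the homomorphism property $\Psi_{g_1 g_2} = \Psi_{g_1}\circ\Psi_{g_2}$ of the action, this yields the identity $\Psi_{\lambda(\Phi_g(s))} = \Psi_g \circ \Psi_{\lambda(s)}$, which is the engine behind every subsequent step.

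Next I would show that \eqref{reduced_reward_definition} and \eqref{reduced_transition_dynamics_definition} are independent of the representative $s \in p^{-1}(\tilde s)$, which I expect to be the crux of the proof. Writing a second representative as $s' = \Phi_g(s)$, the reward case follows by substituting $\Psi_{\lambda(s')}(\tilde a) = \Psi_g(\Psi_{\lambda(s)}(\tilde a))$ into $R(s', \Psi_{\lambda(s')}(\tilde a))$ and applying the reward invariance \eqref{reward_invariance_lie_group_symmetry} with group element $g$. The transition case is the delicate one: the key observation is that $p^{-1}(\widetilde B)$ is a union of orbits, hence a $\Phi$-invariant set, which is \emph{exactly} the class of sets on which the deliberately general transition invariance \eqref{transition_invariance_lie_group_symmetry} is assumed to hold. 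After rewriting the first argument via the equivariance identity, applying \eqref{transition_invariance_lie_group_symmetry} to $B = p^{-1}(\widetilde B)$ equates the two expressions, so $\widetilde{\transition}$ is well defined. This is precisely where the relaxed notion of symmetry (invariance only on $\Phi$-invariant sets, rather than full equivariance within orbits) is what is needed, and no stronger hypothesis is used. Measurability of $\widetilde{R}$ and $\widetilde{\transition}$ follows since $\lambda$, though possibly discontinuous, is measurable (built from local trivializations, per the footnote) while the symmetry data are smooth; I would note this but not belabor it.

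With well-definedness in hand, the two homomorphism conditions reduce to direct substitution. Taking $s$ itself as the representative of $p(s)$ and setting $\tilde a = h(s,a) = \Psi_{\lambda(s)^{-1}}(a)$, definition \eqref{reduced_reward_definition} gives $\widetilde{R}(p(s), h(s,a)) = R(s, \Psi_{\lambda(s)}(\Psi_{\lambda(s)^{-1}}(a))) = R(s,a)$, using $\Psi_{\lambda(s)}\circ\Psi_{\lambda(s)^{-1}} = \id$; this is \eqref{reward_invariance_mdp_homomorphism}. The identical cancellation applied to \eqref{reduced_transition_dynamics_definition} yields $\widetilde{\transition}(\widetilde B \,|\, p(s), h(s,a)) = \transition(p^{-1}(\widetilde B)\,|\,s,a)$, which is \eqref{transition_invariance_mdp_homomorphism}. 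Hence $(p,h)$ is an MDP homomorphism.

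Finally, for the lift I would verify the defining identity $\pi(h_s^{-1}(\widetilde A)\,|\,s) = \widetilde{\pi}(\widetilde A\,|\,p(s))$ for $\pi = (\widetilde{\pi})^\uparrow$. Because $h_s = \Psi_{\lambda(s)^{-1}}$ is a bijection with inverse $\Psi_{\lambda(s)}$, its preimage of a set is $h_s^{-1}(\widetilde A) = \Psi_{\lambda(s)}(\widetilde A)$; substituting this into the definition \eqref{define_lifted_policy} and cancelling $\Psi_{\lambda(s)^{-1}}\circ\Psi_{\lambda(s)} = \id$ recovers $\widetilde{\pi}(\widetilde A\,|\,p(s))$ exactly, so $(\widetilde{\pi})^\uparrow$ is indeed a lift of $\widetilde{\pi}$.
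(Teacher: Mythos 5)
Your proposal is correct and follows essentially the same route as the paper's own proof: the quotient manifold theorem for smoothness of $\manifold{S}/\manifold{G}$, well-definedness of $\widetilde{R}$ and $\widetilde{\transition}$ via the equivariance identity $\Psi_{\lambda(\Phi_g(s))} = \Psi_g \circ \Psi_{\lambda(s)}$ together with the crucial observation that $p^{-1}(\widetilde{B})$ is $\Phi$-invariant (so the relaxed symmetry condition applies), then the homomorphism conditions and the lift property by the cancellation $\Psi_{\lambda(s)}\circ\Psi_{\lambda(s)^{-1}} = \id$. The only cosmetic difference is that you justify measurability of $p$ via continuity of the quotient submersion, whereas the paper cites closedness of orbits of proper actions; both are valid.
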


\ifICRA
\begin{proof}
	Because $\Phi$ is free and proper, 
	${\manifold{S}/\manifold{G}}$ is a smooth manifold 
 of dimension $\dim \manifold{S} - \dim \manifold{G}$  \cite[Thm. 21.10]{SmoothLee}.  
	We first verify that $\widetilde{R}$ and $\widetilde{\transition}$  
 are well-defined (\textit{i.e.}, their values do not depend on the particular choice of ${s \in p^{-1}(\tilde{s})}$).
Since $p$ maps states to $\Phi$-orbits, for any ${s_1, s_2 \in p^{-1}(\tilde{s})}$, there exists some ${g \in \manifold{G}}$ such that ${s_1 = \Phi_g(s_2)}$. Thus, following \eqref{reduced_reward_definition}, 
	\begin{align}
		R(\tilde{s},\tilde{a})
		&=
		R\big(s_1,\Psi_{\lambda(s_1)}(\tilde{a})\big) 
		\label{same_orbit_phase_offset}
		\\		&=
R\big( \Phi_g(s_2),\Psi_{g \lambda (s_2)}(
\tilde{a})\big) 
\label{using_equivariant_map}	
\\&=
		R\big( s_2,\Psi_{\lambda (s_2)}(\tilde{a})\big)
= 		R(\tilde{s},\tilde{a}),
\end{align}	
	where \eqref{using_equivariant_map} follows from the equivariance of $\lambda$ and 
	the invariance of the reward. Similarly, from \eqref{reduced_transition_dynamics_definition}, we compute
	\begin{align}
 \hspace{-2pt}
		\widetilde{\transition}(\widetilde{B} \,|\, \tilde{s},\tilde{a}) 
		&= \transition\big(
		p^{-1}(\widetilde{B})
		\,|\,
		s_1,\Psi_{\lambda(s_1)}(\tilde{a})\big) 
		\\		&=
		\transition\big( p^{-1}(\widetilde{B}) \,|\, \Phi_g(s_2),\Psi_{g\lambda (s_2)}(
		\tilde{a})\big) 
		\\		&=
		\transition\big( 
		p^{-1}(\widetilde{B})
		\,|\, s_2,\Psi_{\lambda (s_2)}(
		\tilde{a})\big)
=		\widetilde{\transition}(\widetilde{B} \,|\, \tilde{s},\tilde{a}) ,
		\label{same_transition_s2}
	\end{align}
	where \eqref{same_transition_s2} follows from \eqref{transition_invariance_lie_group_symmetry}, since for any ${\widetilde{B} \in \mathsf{B}({\widetilde{S}})}$, ${p^{-1}(\widetilde{B}) \in \mathsf{B}(\manifold{S})}$ is a $\Phi$-invariant Borel set.

We now verify the MDP homomorphism. 
Since for each ${g \in \manifold{G}}$, the map $\Psi_g$ is a diffeomorphism, $h_s$ is measurable and surjective for each ${s \in \manifold{S}}$. On the other hand, $p$ is surjective by construction and measurable because orbits of proper actions are closed 
\cite[Cor. 21.8]{SmoothLee}.
Since 
${s \in p^{-1}\big(p(s)\big)}$, 
	\begin{align}
		\nonumber \widetilde{R}\big(p(s),h(s,a)\big)
		&= R\big(s,\Psi_{
			\lambda(s) }
		\circ  
		\Psi_{\lambda(s)^{-1}}(a)
		\big) = R(s,a),
	\end{align}
	hence \eqref{reward_invariance_mdp_homomorphism} holds.
	We verify \eqref{transition_invariance_mdp_homomorphism} similarly, since by \eqref{reduced_transition_dynamics_definition}, 
	\begin{align}
		\nonumber 
		\widetilde{\transition}\big(\widetilde{B} \,|&\, p(s),h(s,a)\big) 
		\\ &= \transition\big(
		p^{-1}(\widetilde{B}) \,|\,
		s,\Psi_{\lambda(s)} \circ 
		\Psi_{\lambda(s)^{-1}}(a)\big)
		\\ &= \transition\big(
		p^{-1}(\widetilde{B})
		\,|\, s,a\big).
	\end{align}
	Thus, $(p,h)$ is an MDP homomorphism from $\manifold{M}$ to $\widetilde{\manifold{M}}$.
 Finally, to see that $(\widetilde{\pi})^\uparrow$ is a lift of $\widetilde{\pi}$,
we compute
\begin{align}
    (\widetilde{\pi})^\uparrow \big(h_s^{-1}(\widetilde{A}) \,|\, s\big) 
    &=
        (\widetilde{\pi})^\uparrow \big(\Psi_{\lambda(s)}(\widetilde{A}) \,|\, s\big) 
        \label{using_definition_of_action_map}
 \\   &= \widetilde{\pi}\big(
    \Psi_{\lambda(s)^{-1}} \circ \Psi_{\lambda(s)}(\widetilde{A}) \,|\, p(s)
    \big) \label{using_definition_of_lifted_policy}
 \\   &= \widetilde{\pi}\big(\widetilde{A} \,|\, p(s)
    \big),
    \end{align}
    where \eqref{using_definition_of_action_map} and \eqref{using_definition_of_lifted_policy} follow directly from   \eqref{action_map_mdp_homomorphism_theorem}, \eqref{define_lifted_policy}, and the fact that $\Psi_g$ is a diffeomorphism for all $g \in \manifold{G}$.
 \end{proof}

 \fi
 \section{Quotient MDPs for Tracking Control \\ in Free-Flying Robotic Systems}

\label{section:application}

\ifICRA
It is now clear that Theorems \ref{mdp_homomorphism_value_equivalence_and_optimality},  \ref{tracking_problem_symmetry}, and \ref{symmetry_implies_homomorphism} can be applied together to reduce the MDP induced by a tracking control problem with symmetry in its dynamics and running costs.
\newpage
\else
\fi

\ifICRA

\begin{example}[name={\normalfont\texttt{Particle}\itshape },continues=example:particle]
Using the symmetry \eqref{point_particle_symmetry_direct_product} of 
$\manifold{M}_{\manifold{T}}$ for the
\texttt{Particle}, we will construct an MDP homomorphism using Theorem \ref{symmetry_implies_homomorphism}.  
	Recall that the state of $\manifold{M}_{\manifold{T}}$ is ${s = \big((r,v),(r^{\mathrm{d}},v^{\mathrm{d}}),u^{\mathrm{d}}\big)}$. We 
	first define 
	\begin{align}
		\lambda
		 (s) &
   := 
   \big((r^{\mathrm{d}},v^{\mathrm{d}}),u^{\mathrm{d}}\big),
  \label{point_particle_lambda_map}
		\\
		p 
  (s) &
  :=
  (r - r^{\mathrm{d}}, v - v^{\mathrm{d}}).
  \label{point_particle_observation}
	\end{align}
	It is easily verified that $\lambda$ is equivariant and $p$ maps each state $s$ to its $\Phi$-orbit.
We now define a quotient MDP 
		$\widetilde{\manifold{M}_{\manifold{T}}}$
 as described in Theorem \ref{symmetry_implies_homomorphism}. 
 The state of $\widetilde{\manifold{M}_{\manifold{T}}}$ is ${\tilde{s} = (r^{\mathrm{e}}, v^{\mathrm{e}}) \in \widetilde{\manifold{S}} = \manifold{S} / \manifold{G}  \simeq T \mathbb{R}^3}$ 
and the actions are ${\tilde{a} = u^{\mathrm{e}} \in \widetilde{A} = \mathbb{R}^3}$. From \eqref{action_map_mdp_homomorphism_theorem} and \eqref{point_particle_lambda_map}, we may derive  
\begin{equation}
    h(s,a) = \Psi_{(-r^{\mathrm{d}},-v^{\mathrm{d}},-u^{\mathrm{d}})}(u) = u - u^{\mathrm{d}}.
\end{equation}
Since clearly ${\big((r^{\mathrm{e}}, v^{\mathrm{e}}), (0, 0), 0\big) \in p^{-1}(r^{\mathrm{e}},v^{\mathrm{e}})}$, from \eqref{reduced_reward_definition}, \eqref{point_particle_action_symmetry_direct_product}, and \eqref{point_particle_lambda_map} we may construct the reduced reward as
\begin{align}
&\widetilde{R}(\tilde{s},\tilde{a}) = R\big(s,\Psi_{\lambda(s)}(\tilde{a})\big) \,\Big|\, {}_{s=\mathlarger((r^{\mathrm{e}}, v^{\mathrm{e}}), (0, 0), 0\mathlarger)}
\\& =
- \alpha({r^{\mathrm{e}}}) - c_v \hspace{.5pt}\norm{v^{\mathrm{e}}}
-c_u \hspace{.5pt} \norm{u^{\mathrm{e}}}.
\end{align} 
Likewise, a straightforward calculation using \eqref{point_particle_tracking_MDP_dynamics}, \eqref{point_particle_action_symmetry_direct_product}, and \eqref{reduced_transition_dynamics_definition} will show that the reduced transitions are  given by
	\begin{align}
 \nonumber
		\widetilde{\transition}(\widetilde{B} \,|\, \tilde{s},\tilde{a}) &=  \transition\big(
		p^{-1}(\widetilde{B}) \,|\,
		s,\Psi_{\lambda(s)}(\tilde{a}) \big)
 \,\Big|\, {}_{s=\mathlarger((r^{\mathrm{e}}, v^{\mathrm{e}}), (0, 0), 0\mathlarger)}
\\ &= \begin{cases}
1, \, 
(r^{\mathrm{e}} + v^{\mathrm{e}} \dt, v^{\mathrm{e}} + \tfrac{1}{m} u^{\mathrm{e}} \dt)
\in \widetilde{B},
\\ 0, \, \textrm{otherwise},
\end{cases}
\end{align}
which is nothing but the usual ``error dynamics'' \cite{Maithripala2006}, \textit{i.e.}%
\begin{align}
		r_{t+1}^{\mathrm{e}} = r_t^{\mathrm{e}} + v_t^{\mathrm{e}} \dt, \quad v_{t+1}^{\mathrm{e}} = v_t^{\mathrm{e}} + \tfrac{1}{m} u_t^{\mathrm{e}} \dt.
\end{align}
Finally, by Theorem \ref{symmetry_implies_homomorphism}, $(p,h)$ is an MDP homomorphism from $\manifold{M}_{\manifold{T}}$
	to $\widetilde{\manifold{M}_{\manifold{T}}} = 
	\big(
 T\mathbb{R}^3,
\mathbb{R}^3,
	\widetilde{R}, \widetilde{\tau}, \gamma
	\big)
	$, and
moreover we may lift any policy $\widetilde{\pi}$ for $\widetilde{\manifold{M}_{\manifold{T}}}$
 to $\widetilde{\manifold{M}}$ using \eqref{define_lifted_policy}, obtaining
 \begin{align}
    (\widetilde{\pi})^\uparrow (A \,|\, s) 
  &= \widetilde{\pi}(
  A - u^{\mathrm{d}} \,|\,
  r - r^{\mathrm{d}}, v - v^{\mathrm{d}}).
  \label{particle_lifted_policy}
\end{align}
By Theorem \ref{mdp_homomorphism_value_equivalence_and_optimality}, the action-value function for $(\widetilde{\pi})^\uparrow$ satisfies
\begin{align}
    Q^{(\widetilde{\pi})^\uparrow}(s,a) 
    &= \widetilde{Q}^{\widetilde{\pi}}\big((r - r^{\mathrm{d}}, v - v^{\mathrm{d}}),u - u^{\mathrm{d}}\big).
  \label{particle_action_value_function}
\end{align}
Thus, an optimal policy can observe only the position and velocity error and augment the result with the reference force (\textit{i.e.}, $u = \tilde{\pi}(r-r^{\mathrm{d}}, v-v^{\mathrm{d}}) + u^{\mathrm{d}}$ for deterministic policies).
\end{example}

\begin{example}[{\normalfont \texttt{Astrobee}}\textit {\cite{bualat2015astrobee}}]
\label{astrobee_example}
\begingroup
	This space robot has state ${x = (q,\xi)}$ in ${\manifold{X} = SE(3) \times \mathbb{R}^6}$ (\textit{i.e.}, the pose $q$ as a homogeneous transform 
 and twist ${\xi = (\omega,v)}$) and action ${u = (\mu, f)}$ in ${ \manifold{U} = \mathbb{R}^6}$ (\textit{i.e.}, the applied wrench).
 The dynamics are%
\begin{subequations}
\label{astrobee_dynamics}
    \begin{align}
    \label{astrobee_kinematics}
        q_{t+1} &= q_t \exp (\hat{\xi}_t \dt),    
\\
v_{t+1} &= v_t + \tfrac{1}{m} f_t \dt,
    \label{astrobee_newton_equations}
\\
\omega_{t+1} &= \omega_t + \mathbb{J}^{-1} (\mu_t - \omega_t \times \mathbb{J} \, \omega_t) \dt,
    \label{astrobee_euler_equations}
    \end{align}
\end{subequations}
where $\hat{\cdot} : \mathbb{R}^6 \to \mathfrak{se}(3)$.
The running costs are defined by %
	\begin{subequations}
  \label{astrobee_costs}
		\begin{align}
  &
  \begin{aligned}
      			& J_{\manifold{X}}(x,x^{\mathrm{d}}) := 
      \\    & \quad \quad 
         \alpha(r - r^{\mathrm{d}}) + 
         c_R \norm{
   \log
   (R^{\mathrm{T}} R^{\mathrm{d}})} \, + 
   c_\xi \norm{\xi - \xi^{\mathrm{d}}}
   ,
  \end{aligned}
\label{astrobee_tracking_cost}
			\\
			&J_{\manifold{U}}(u,u^{\mathrm{d}}) := c_u \norm{u - u^{\mathrm{d}}},
		\end{align}
	\end{subequations}
 where $r$ and $R$ are the $\mathbb{R}^3$ and $SO(3)$ components of $q$.
Letting ${\rho = \mathcal{N}(0,\Sigma)}$,  we may construct $\mathcal{M}_{\mathcal{T}}$ as in Def. \ref{tracking_mdp_definition}.
Next, let ${\mathcal{K} = SE(3)}$ act on $\mathcal{X}$ and ${\mathcal{H} = \{\identity\}}$ act on $\manifold{U}$ via
    \begin{align}
        \Psi_{k}(q, \xi) := ( k q, \xi),
        \quad
        \Theta_{h}(w) := w,
        \label{astrobee_actions}
    \end{align}
where \eqref{running_cost_invariance} and \eqref{physical_system_invariance} hold for these free and proper actions. Using Theorem \ref{tracking_problem_symmetry} to derive a symmetry of 
$\mathcal{M}_{\mathcal{T}}$ as in \eqref{combined_group_action_definition}, we apply
 Theorem \ref{symmetry_implies_homomorphism} with ${\lambda : 
s \mapsto q}$ to ultimately obtain an MDP homomorphism $(p,h)$, where
 ${h_s = \id}$ for all ${s = (q, \xi, q^{\mathrm{d}}, \xi^{\mathrm{d}}, u^{\mathrm{d}}) \in \mathcal{S}}$, and
\begin{gather}
\label{astrobee_observation}
    p
    (s) 
    :=
    \big(q^{-1} q^{\mathrm{d}}, \xi, \xi^{\mathrm{d}}, u^{\mathrm{d}}\big).
\end{gather}
Thus, an optimal policy and its $Q$ function 
can be written
\begin{align}
\label{astrobee_lifted_policy}
        (\widetilde{\pi})^\uparrow (A \,|\, s) &= 
        \widetilde{\pi}\big( A \,|\, 
        (q^{-1} q^{\mathrm{d}}, \xi, \xi^{\mathrm{d}}, u^{\mathrm{d}})\big),
        \\
\label{astrobee_action_value_function}
    Q^{(\widetilde{\pi})^\uparrow}(s,a) 
    &= \widetilde{Q}^{\widetilde{\pi}}\big(
    (q^{-1} q^{\mathrm{d}}, \xi, \xi^{\mathrm{d}}, u^{\mathrm{d}})
    ,u\big).
\end{align}
Hence, an optimal policy can be learned using an observation that sees only the \textit{error} between the actual and reference poses, instead of observing these poses separately.

\endgroup
\end{example}

\begin{example}[{\normalfont \texttt{Quadrotor}}\textit {\cite{Mellinger2011}}]
\label{quadrotor_example}
This aerial robot has the same state space as the \texttt{Astrobee}, but the actions are the ``single-rotor thrusts''
${u \in \manifold{U} = \mathbb{R}^4}$. The dynamics and running costs are as given in 
\eqref{astrobee_dynamics} and \eqref{astrobee_costs}, but with
${f_t = (0,0, 
u^1_t + u^2_t + u^3_t + u^4_t
)}$
and
${\mu_t = \big(\ell (u^1_t - u^3_t), 
\ell (u^2_t - u^4_t), 
c (u^1_t - u^2_t + u^3_t - u^4_t)\big)}$, and%
\begin{align}
    v_{t+1} &= v_t + \big(\tfrac{1}{m} f_t - R_t{}^{\textrm{T}} (\mathrm{g} \, \mathrm{e}_3) \big)\dt
\end{align}
instead of \eqref{astrobee_newton_equations}, where ${\mathrm{g}}$ is the magnitude of gravitational acceleration, ${\mathrm{e}_3 = (0,0,1)}$, and ${R_t \in SO(3)}$ is the rotation component of $q_t$. Gravity ``breaks'' the $SE(3)$ symmetry of the system, but preserves the $SE(3)$ subgroup
\begingroup
\setlength\arraycolsep{3pt}
\begin{align}
    \mathcal{K}^\prime = \left\{
    \begin{pmatrix}
        \mathrm{rot}_z(\theta) & r \\ 0 & 1
    \end{pmatrix} : (r, \theta) \in \mathbb{R}^3 \times \mathbb{S}^1
    \right\}
\end{align}
\endgroup
which is isomorphic (as a Lie group) to ${SE(2) \times \mathbb{R}}$ and acts on ${SE(3) \times \mathbb{R}^6}$ via the restriction of \eqref{astrobee_actions}. Using Theorems \ref{tracking_problem_symmetry} and \ref{symmetry_implies_homomorphism}, we may derive an MDP homomorphism $(p,h)$ for which
 $h_s = \id$ for all $s = (q, \xi, q^{\mathrm{d}}, \xi^{\mathrm{d}}, u^{\mathrm{d}}) \in \mathcal{S}$ and
\begin{gather}
    p
    (s) 
    :=
    (q^{-1} q^{\mathrm{d}}, R^{\,\textrm{T}} \mathrm{e}_3,  \xi, \xi^{\mathrm{d}}, u^{\mathrm{d}}),
    \label{quadrotor_observation}
\end{gather}
noting that $R^{\,\textrm{T}} \mathrm{e}_3$ is the gravity direction in body coordinates.
Thus, an optimal policy (and its $Q$ function) can be written
\begin{align}
\label{quadrotor_lifted_policy}
        (\widetilde{\pi})^\uparrow (A \,|\, s) &= 
        \widetilde{\pi}\big( A \,|\, 
(q^{-1} q^{\mathrm{d}}, R^{\,\textrm{T}} \mathrm{e}_3,  \xi, \xi^{\mathrm{d}}, u^{\mathrm{d}})
        \big),
        \\
\label{quadrotor_action_value_function}
    Q^{(\widetilde{\pi})^\uparrow}(s,a) 
    &= \widetilde{Q}^{\widetilde{\pi}}\big(
(q^{-1} q^{\mathrm{d}}, R^{\,\textrm{T}} \mathrm{e}_3,  \xi, \xi^{\mathrm{d}}, u^{\mathrm{d}})
,u\big).
\end{align}
Hence, our theory demonstrates that for quadrotors, the state space of the tracking problem can be reduced by replacing the reference and actual poses with the pose error and the body-frame gravity vector, without degrading the best-case learned policy. Consider how this differs from heuristic approximations in prior work such as \cite{Molchanov2019}, whose state included the entire orientation $R$ (incompletely reducing the symmetry) and replaced the actual and reference angular velocities with the velocity error, which corresponds to an \textit{approximate} symmetry due to the ``cross terms'' in \eqref{astrobee_euler_equations}.

\end{example}

\else%
Clearly, Theorems \ref{mdp_homomorphism_value_equivalence_and_optimality},  \ref{tracking_problem_symmetry}, and \ref{symmetry_implies_homomorphism} can be applied together to reduce the MDP induced by a tracking control problem with symmetry in its dynamics and running costs.
We refer to our open-source code for details (due to space constraints).
\begin{example}[name={\normalfont\texttt{Particle}\itshape}]
\label{particle_example}
	Consider a point mass $m$ with physical state ${(r,v) \in T\mathbb{R}^3}$ 
  controlled by a force in $\mathbb{R}^3$. 
 Running costs penalize error in position, velocity, and actions. Using the left actions of $T\mathbb{R}^3$ and $\mathbb{R}^3$ on themselves,
we derive an MDP homomorphism 
where
${h(s,a)  := u - u^{\mathrm{d}}}$ and 
	\begin{align}
		p 
\big((r,v),(r^{\mathrm{d}},v^{\mathrm{d}}),u^{\mathrm{d}}\big)
  &
  :=
  (r - r^{\mathrm{d}}, v - v^{\mathrm{d}}),
  \label{point_particle_observation}
\end{align}
reducing the tracking MDP from ${T\mathbb{R}^3 \times T\mathbb{R}^3 \times \mathbb{R}^3}$ to $T\mathbb{R}^3$.

\end{example}

\begin{example}[{\normalfont \texttt{Astrobee}}\textit {\cite{bualat2015astrobee}}]
\label{astrobee_example}
\begingroup
	This rigid body's state is its pose and twist ${x = (q,\xi)}$ in ${\manifold{X} = SE(3) \times \mathbb{R}^6}$, and the action is a wrench 
 ${u \in \mathbb{R}^6}$.
Running costs penalize error in pose,  twist, and wrench. Using $SE(3)$ symmetry \cite{Ostrowski1999},
we derive an MDP homomorphism 
with
 ${h(s,a) := a}$ and
\begin{gather}
\label{astrobee_observation}
    p
(q, \xi, q^{\mathrm{d}}, \xi^{\mathrm{d}}, u^{\mathrm{d}})
    :=
    \big(q^{-1} q^{\mathrm{d}}, \xi, \xi^{\mathrm{d}}, u^{\mathrm{d}}\big),
\end{gather}
reducing the tracking MDP's state space by $6$ dimensions.

\endgroup
\end{example}

\begin{example}[{\normalfont \texttt{Quadrotor}}\textit {\cite{Mellinger2011}}]
\label{quadrotor_example}
This underactuated aerial robot has the same state space as the \texttt{Astrobee}, but the actions are the ``single-rotor thrusts''
${u \in \manifold{U} = \mathbb{R}^4}$ and gravity reduces the symmetry group to ${SE(2) \times \mathbb{R}}$ (corresponding to  horizontal plane displacements and vertical translations).
We derive an MDP homomorphism 
with
 ${h(s,a) := a}$ and
\begin{gather}
    p
(q, \xi, q^{\mathrm{d}}, \xi^{\mathrm{d}}, u^{\mathrm{d}})
    :=
    (q^{-1} q^{\mathrm{d}}, R^{\,\textrm{T}} \mathrm{e}_3,  \xi, \xi^{\mathrm{d}}, u^{\mathrm{d}}),
    \label{quadrotor_observation}
\end{gather}
noting that $R^{\,\textrm{T}} \mathrm{e}_3$ is the gravity direction in body coordinates.

\end{example}

\fi

\section{Experiments} \label{section:experiments}

We now explore the effects of our symmetry-informed approach on sample efficiency and performance of model-free reinforcement learning for tracking control. RL environments were implemented for each of the tracking control MDPs in Examples \ref{particle_example}-\ref{quadrotor_example}, written in \texttt{jax} \cite{jax2018github} for performance. To implement environments for the quotient MDP arising from reduction by a symmetry group, we modify each environment's observation to the reduced state given in \eqref{point_particle_observation}, \eqref{astrobee_observation}, and \eqref{quadrotor_observation} (whereas the baseline sees the full-state observation $(x,x^{\mathrm{d}}, u^{\mathrm{d}})$). 
\ifICRA
As indicated respectively by \eqref{particle_lifted_policy}-\eqref{particle_action_value_function}, \eqref{astrobee_lifted_policy}-\eqref{astrobee_action_value_function}, and 
\eqref{quadrotor_lifted_policy}-\eqref{quadrotor_action_value_function}, we also modify (\textit{i.e}, lift) the actions generated by the learned policy (and those passed to the action-value function). 
\else
We also modify the actions according to the definition of $h$.
\fi
For the \texttt{Particle} environment, we isolate the effects of reduction by different subgroups of the symmetry 
\ifICRA
given in \eqref{point_particle_symmetry_direct_product}
\fi
by also implementing environments reduced by translational symmetry alone (\textit{i.e.}, ${p(s) := (r - r^{\mathrm{d}}, v, v^{\mathrm{d}}, u^{\mathrm{d}})}$)
and by translational and velocity symmetry  alone (\textit{i.e.}, ${p(s) := (r - r^{\mathrm{d}}, v - v^{\mathrm{d}}, u^{\mathrm{d}})}$).

We use a custom implementation of PPO \cite{schulman2017proximal} (see code for details), with the same hyperparameters across all variants of each environment.
During training, the reference actions are sampled from a stationary distribution (as in Def. \ref{tracking_mdp_definition}), but we evaluate zero-shot on pre-planned (dynamically feasible) reference trajectories. 
Fig. \ref{fig:all_envs}
and Table \ref{table:all_envs}
report total reward (during training) and average tracking error (during evaluation) when starting from a randomly sampled initial state.

\begin{figure}[t]
\centering
\includegraphics[trim={0.25cm 0.10cm 0.25cm 0},clip,width=\columnwidth]{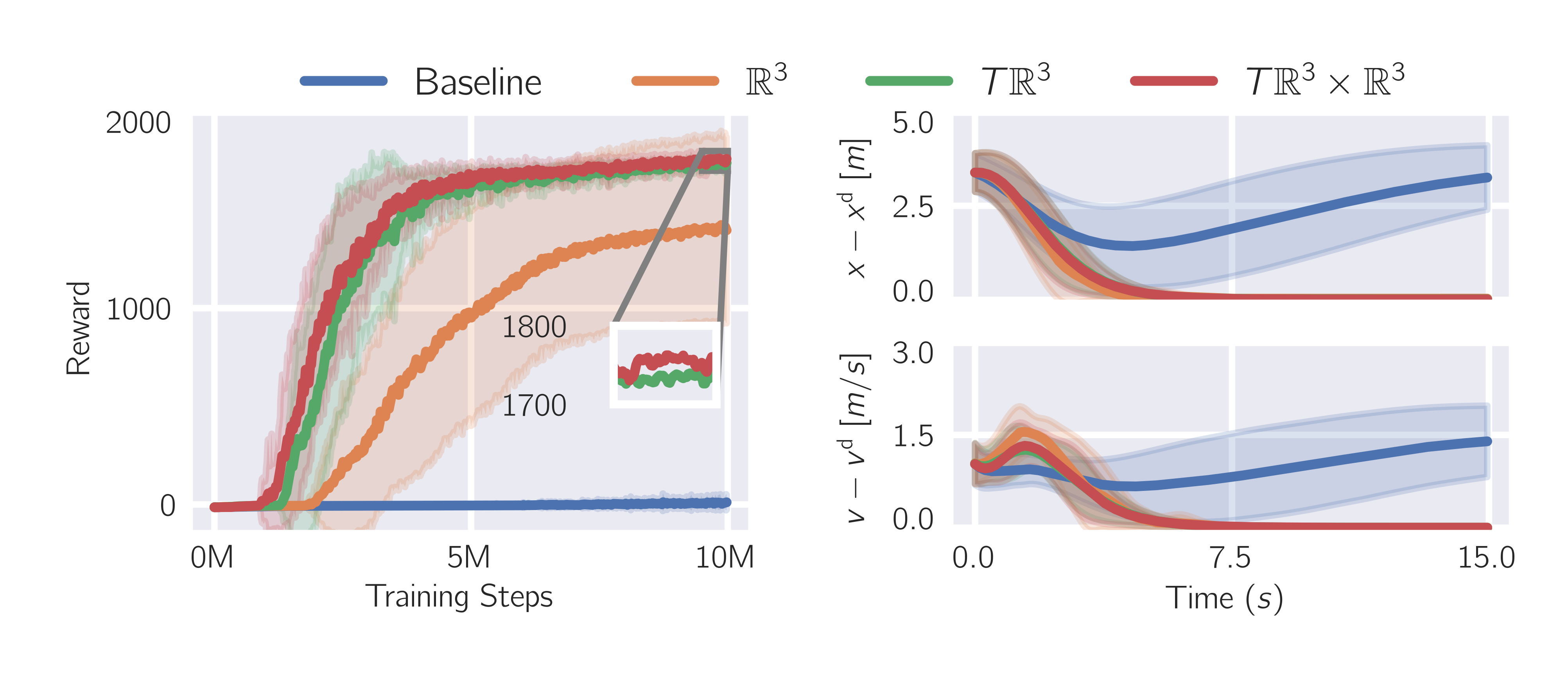}

\vspace{-10pt}
{\footnotesize (a)
Training and evaluation for
\texttt{Particle}%
.
}
\vspace{0pt}

\includegraphics[trim={0.25cm 0.10cm 0.25cm 0},clip,width=\columnwidth]{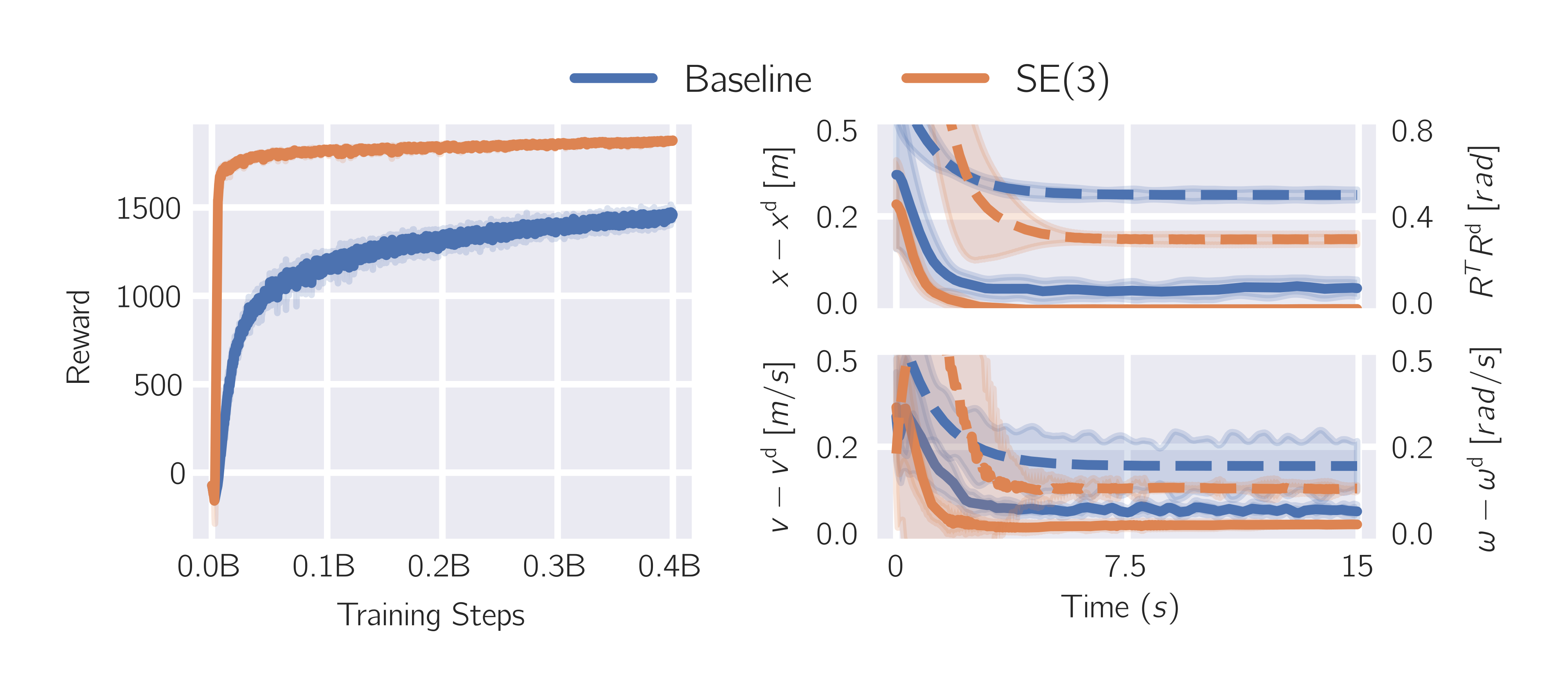}

\vspace{-10pt}
{\footnotesize (b)
Training and evaluation for
\texttt{Astrobee}.
}
\vspace{0pt}

\includegraphics[trim={0.25cm 0.10cm 0.25cm 0},clip,width=\columnwidth]{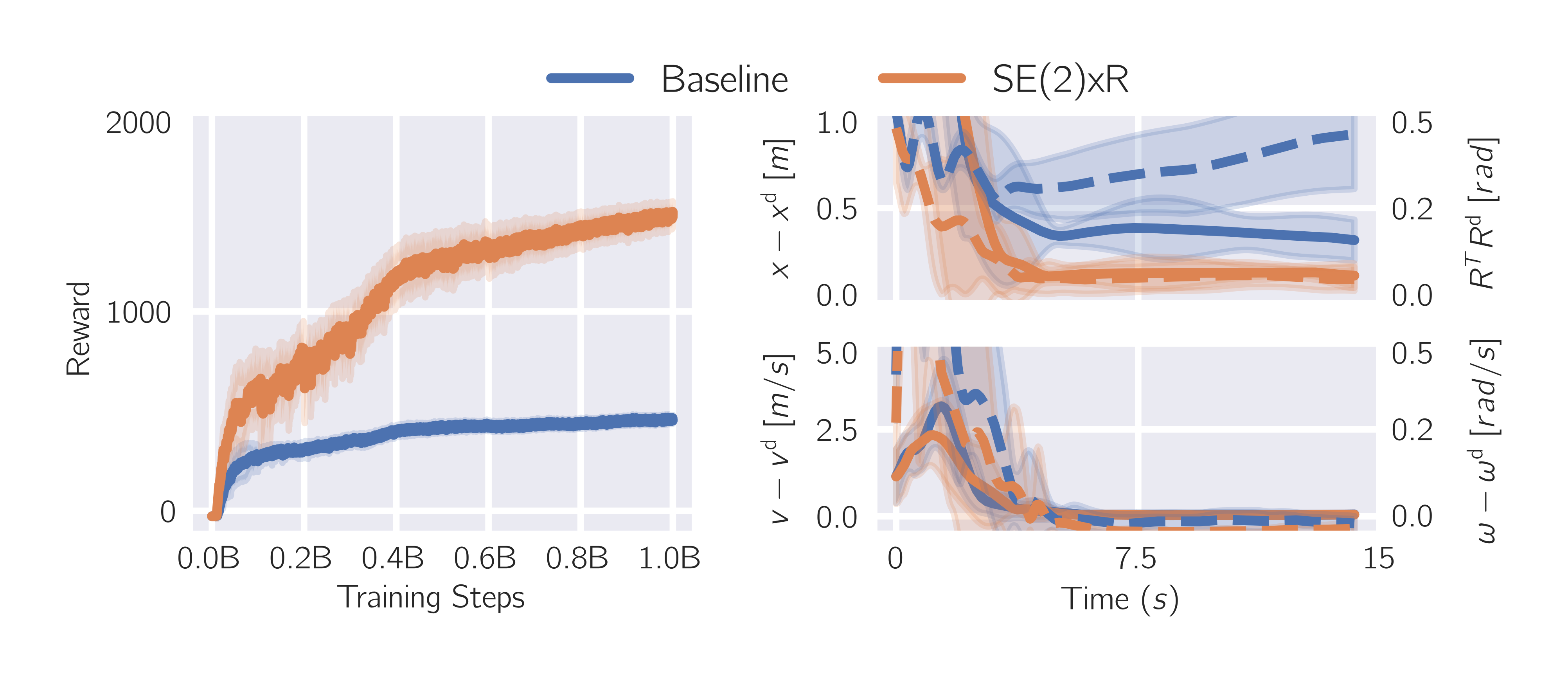}

\vspace{-10pt}
{\footnotesize (c) Training and evaluation for
 \texttt{Quadrotor}.
}
\caption{%
\hspace{-6pt}
Mean reward during training (for 10 training seeds) and, for the best-performing seed, mean tracking error during evaluation (for 20 trajectories),
with translational errors as solid lines and rotational errors as dashed lines.
}
\vspace{-14pt}
\label{fig:all_envs}
\end{figure}

\begin{table}[t]
\vspace{4pt}
\begin{center}
    \caption{\vspace{-1pt}Comparison of RMS Tracking Error on Planned Trajectories}
    \vspace{-2pt}
    \begingroup
    \setlength{\tabcolsep}{2.5pt} %
    \def\arraystretch{1.25}%
    \label{table:all_envs}

    \begin{tabular}{c  c  c c c c}
        Environment & $\mathcal{G}$ & $r \, \mathrm{[cm]}$ & $v \, \mathrm{[cm / s]}$ & $R \,\mathrm{[rad]}$  & $\omega\,\mathrm{[rad/s]}$ \\
        \hline
        \Xhline{1.5\arrayrulewidth}   
        \multirow{4}{*}{
\hspace{-3.5pt}\texttt{Particle}\hspace{-3.5pt}
        } & Baseline & 
298.3$\pm$1.2 & 103.0$\pm$1.1 & - & -
        \\
        & $\mathbb{R}^3$ & 
\hphantom{0}10.0$\pm$0.1 & \hphantom{00}4.8$\pm$0.1 & - & -
\\
        & $T\mathbb{R}^3$ & 
\hphantom{00}9.8$\pm$0.2 & \hphantom{00}4.3$\pm$0.0 & - & -
 \\
        & $T\mathbb{R}^3 \hspace{-1pt} \times \hspace{-1pt}\mathbb{R}^3$  & 
\pmb{\hphantom{00}8.3$\pm$0.1} & \pmb{\hphantom{00}3.9$\pm$0.2} & - & -
        \\ 
        \hline
        \multirow{2}{*}{
\hspace{-3.5pt}\texttt{Astrobee}\hspace{-3.5pt}
        } & Baseline & 
\hphantom{0}10.4$\pm$1.0 & \hphantom{00}6.0$\pm$1.0 & 0.75$\pm$0.01 & 0.24$\pm$0.02
        \\
        & 
        $SE(3)$
        & 
\pmb{\hphantom{00}1.3$\pm$2.7} & \pmb{\hphantom{00}3.3$\pm$3.1} & \pmb{0.37$\pm$0.04} & \pmb{0.16$\pm$0.06}
        \\
        \hline
        \multirow{2}{*}{
\hspace{-3.5pt}\texttt{Quadrotor}\hspace{-3.5pt}
        } & 
Baseline & 
\hphantom{0}66.3$\pm$3.2 & \hphantom{0}41.3$\pm$2.5 & 0.48$\pm$0.02 & 0.25$\pm$0.02
\\
& ${SE(2) \hspace{-1pt}\times\hspace{-1pt} \mathbb{R}}$ & 
\pmb{\hphantom{0}28.4$\pm$4.1} & \pmb{\hphantom{0}19.6$\pm$3.1} & \pmb{0.25$\pm$0.05} & \pmb{0.12$\pm$0.01}
 \\
        \hline
    \end{tabular}
    \endgroup
    
\end{center}    
{
        \vspace{-2pt}
We report the mean and standard deviation (for ${10}$ training seeds) of
the trained policy’s RMS tracking error (for a dataset of ${20}$ trajectories).}%
    \vspace{-10pt}
\end{table}

\section{Discussion}

\label{section:discussion}

Fig. \ref{fig:all_envs} shows a clear trend across the board: 
greater symmetry exploitation leads to improved sample efficiency. 
The tracking error evaluation shown in Table \ref{table:all_envs} and Fig. \ref{fig:all_envs} follows a similar trend. For the \texttt{Particle}, the vast majority of this benefit is achieved by reduction of the translational symmetry, although incorporating the velocity and force symmetries yields modest additional gains. This seems consistent with the large improvement we see for the \texttt{Astrobee} and \texttt{Quadrotor} 
after reduction by (a subgroup of) $SE(3)$.
Careful reward engineering 
or hyperparameter tuning
might improve performance
(especially for the baseline, which currently struggles to learn effectively),
but we 
instead
focus on analyzing the benefit of exploiting symmetry for a fixed reward. Nonetheless, any reward depending only on the reduced state $\tilde{s} = p(s)$ would preserve the symmetry.

Our approach assumes that at deployment, an upstream planner provides dynamically feasible reference trajectories. For the (underactuated) \texttt{Quadrotor}, these trajectories are planned using differential flatness \cite{Mellinger2011} from Lissajous curves in the flat space. However, in theory any other method (\textit{e.g.}, direct collocation \cite{Kelly2017}) could be used to generate a suitable reference.
We expect our policies to generalize well to a wide range of upstream planning methodologies, and future work should explore this hypothesis. Going forward, we also hope to apply these methods to new robot morphologies that are too complex for real-time numerical optimal control or for which no closed-form analytical controllers are known.

\section{Conclusion}

In this work, we exploit the natural Lie group symmetries of free-flying robotic systems to mitigate the challenges of learning trajectory tracking controllers. 
\label{section:conclusion}
We formulate the tracking problem as a single stationary MDP, proving that the underlying symmetries of the dynamics and running costs permit the reduction of this MDP to a lower-dimensional problem. When learning tracking controllers for space and aerial robots, training is accelerated and 
tracking error is reduced after the same number of training steps.
We believe our theoretical framework 
provides insight into the use of RL for systems with symmetry in robotics applications.

\balance 
\bibliographystyle{IEEEtran}
\bibliography{IEEEabrv,references}

\ifappendix

\begin{appendix}

The following Lemma provides a more detailed justification for a claim made in the proof of Theorem \ref{tracking_problem_symmetry}.
	\begin{lemma}
		Let ${\Upsilon: \manifold{G} \times \manifold{M} \to \manifold{M}}$ and 
		${\Theta : \manifold{H} \times \manifold{N} \to \manifold{N}}$ be free and proper group actions. Then,
		the {\normalfont ``product action''} 
		\begin{equation}
			\Pi_{(g,h)}(m,n) := \big(\Upsilon_g(m),\Theta_h(n)\big)
			\label{product_action}
		\end{equation}
		of $\manifold{G} \times \manifold{H}$ on $\manifold{M} \times \manifold{N}$ 
		and
		the {\normalfont ``diagonal action''}
		\begin{equation}
			\Gamma_g(m_1,m_2) := \big(\Upsilon_g(m_1),\Upsilon_g(m_2)\big)
			\label{diagonal_action}
		\end{equation}		
		of $\manifold{G}$ on ${\manifold{M} \times \manifold{M}}$
		are both free and proper.
	\end{lemma}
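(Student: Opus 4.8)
The plan is to dispatch freeness immediately and then treat properness uniformly via a sequential criterion. For freeness of the product action, I would observe that $\Pi_{(g,h)}(m,n) = (m,n)$ forces $\Upsilon_g(m) = m$ and $\Theta_h(n) = n$ simultaneously; freeness of $\Upsilon$ and $\Theta$ then give $g = \identity_{\mathcal{G}}$ and $h = \identity_{\mathcal{H}}$, so $(g,h)$ is the identity of $\mathcal{G} \times \mathcal{H}$. For the diagonal action, $\Gamma_g(m_1,m_2) = (m_1,m_2)$ already forces $\Upsilon_g(m_1) = m_1$, so freeness of $\Upsilon$ alone yields $g = \identity_{\mathcal{G}}$ (the second coordinate is not even needed).

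For properness I would invoke the sequential characterization of proper actions \cite[Prop. 21.5]{SmoothLee}: the action is proper if and only if, whenever $x_i \to x$ converges and the translates $g_i \cdot x_i \to y$ also converge, the sequence $(g_i)$ admits a convergent subsequence. For the product action, suppose $(m_i, n_i) \to (m,n)$ and $\Pi_{(g_i,h_i)}(m_i,n_i) = (\Upsilon_{g_i}(m_i), \Theta_{h_i}(n_i)) \to (y,z)$. Reading off the first factor gives $m_i \to m$ and $\Upsilon_{g_i}(m_i) \to y$, so properness of $\Upsilon$ supplies a subsequence along which $(g_i)$ converges; passing to that subsequence and reading off the second factor, properness of $\Theta$ supplies a further subsequence along which $(h_i)$ converges. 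Along this final subsequence $(g_i, h_i)$ converges in $\mathcal{G} \times \mathcal{H}$, so $\Pi$ is proper.

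The diagonal action is handled by the same criterion, and this is where the only genuine subtlety sits: because a single element $g$ acts on both copies of $\mathcal{M}$, I need just one coordinate to control it. Given $(m_1^i, m_2^i) \to (m_1,m_2)$ and $\Gamma_{g_i}(m_1^i,m_2^i) = (\Upsilon_{g_i}(m_1^i), \Upsilon_{g_i}(m_2^i)) \to (y_1,y_2)$, I would discard the second coordinate and apply properness of $\Upsilon$ to $m_1^i \to m_1$ and $\Upsilon_{g_i}(m_1^i) \to y_1$ to extract a convergent subsequence of $(g_i)$ at once. The ``hard part,'' such as it is, amounts to recognizing that the diagonal case does \emph{not} require independent control of both coordinates, precisely because the group element is shared.

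As a remark, a more structural route for the product action is available: its orbit map $\big((g,h),(m,n)\big) \mapsto \big((\Upsilon_g(m),\Theta_h(n)),(m,n)\big)$ is, up to a permutation of Cartesian factors (a diffeomorphism), exactly the product of the orbit maps of $\Upsilon$ and $\Theta$, and a product of proper maps between locally compact Hausdorff spaces is again proper. I would likely note this in passing but carry out the sequential argument as the main proof, since it applies uniformly to both the product and the diagonal actions.
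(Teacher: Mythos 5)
Your proof is correct, but it takes a genuinely different route from the paper's. You apply the sequential characterization of properness (part of \cite[Prop.~21.5]{SmoothLee}: if $x_i \to x$ and $g_i \cdot x_i$ converges, then $(g_i)$ has a convergent subsequence) uniformly to both actions, extracting a convergent subsequence of $(g_i)$ from the first factor and, for the product action, a further subsequence of $(h_i)$ from the second. The paper instead splits the work: for the product action $\Pi$ it notes that $\hat{\Pi}$ is, up to permutation of components, the product map of $\hat{\Upsilon}$ and $\hat{\Theta}$, and invokes the fact that a product of continuous proper maps between locally compact Hausdorff spaces is proper; for the diagonal action $\Gamma$ it uses the compact-set characterization from the same Prop.~21.5 (compactness of $\mathcal{G}_K^\Upsilon = \{g : K \cap \Upsilon_g(K) \neq \varnothing\}$), bounding $\hat{\Gamma}^{-1}(L)$ inside $\mathcal{G}_K^\Upsilon \times K \times K$ and concluding by closedness. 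Your sequential argument is shorter and more uniform, and it isolates the one real insight cleanly --- that the diagonal case needs only a single coordinate to control the shared group element; its validity rests on manifolds being metrizable, which holds here. The paper's argument is more point-set in flavor but transfers to settings where sequential criteria are unavailable, and its product-map observation is the more structural way to see the $\Pi$ case. The freeness arguments are identical in both proofs, including your observation that the diagonal case uses only the first coordinate.
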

	\begin{proof}
		From \eqref{product_action}, ${\Pi_{(g,h)}(m,n) = (m,n)}$ implies that
		${\Upsilon_g(m) = m}$ and ${\Theta_h(n) = n}$ (and hence 
		${(g,h)=}$ ${(\identity_{\manifold{G}},\identity_{\manifold{H}})}$, since $\Upsilon$ and $\Theta$ are free). 
		Likewise, 
		from \eqref{diagonal_action}, ${\Gamma_g(m_1,m_2) = (m_1,m_2)}$ implies
		${\Upsilon_g(m_1) = m_1}$ (and hence
		${g = \identity_{\manifold{G}}}$, since $\Upsilon$ is free). 
		Thus, $\Gamma$ and $\Pi$ are free. 
		
		By definition, a group action ${\Phi : \manifold{G} \times \manifold{X} \to \manifold{X}}$ is proper if and only if ${\hat{\Phi} :  \manifold{G} \times \manifold{X} \to \manifold{X} \times \manifold{X}}$, ${(g,x) \mapsto \big(\Phi_g(x),x\big)}$ is a proper map (\textit{i.e.}, the preimage of any compact set is compact). 
		Since smooth manifolds are locally compact and Hausdorff, the product of continuous proper maps between them is also proper \cite[\S 1.5]{Dieck2008}. Observing that ${\hat{\Pi} : (g,h,m,n) \mapsto \big(\Upsilon_g(m),\Theta_h(n), m, n\big)}$ is (up to permutation of the components) the product map of $\hat{\Upsilon}$ and $\hat{\Theta}$, it follows that $\Pi$ is proper. 
		Additionally, since
		$\Upsilon$ is proper, the set ${\manifold{G}_C^\Upsilon = \{g \in \manifold{G} : C \cap \Upsilon_g(C) \neq \varnothing\}}$ is compact for every compact ${C \subseteq \manifold{M}}$ \cite[Prop. 21.5]{SmoothLee}.
		Considering any compact subset ${L \subseteq \manifold{M} \times\manifold{M} \times\manifold{M} \times\manifold{M} }$, we define ${K = \pr_1(L) \cup \pr_2(L) \cup \pr_3(L) \cup \pr_4(L)}$.
		Thus,
		\begin{align}
			&\hat{\Gamma}^{-1}(L) \subseteq
			\hat{\Gamma}^{-1}(K \times K \times K \times K)
			\\ &=
			\{(g,m_1,m_2) : \Upsilon_g(m_1), \Upsilon_g(m_2), m_1, m_2 \in K\}
			\\ &\subseteq \manifold{G}_K^\Upsilon \times K \times K.
		\end{align}
		The continuity of $\hat{\Gamma}$ implies that 
		$\hat{\Gamma}^{-1}(L)$ is a closed subset of the compact set ${\manifold{G}_K^\Upsilon \times K \times K}$ and is thus compact.
	\end{proof}

\end{appendix}

\nobalance

\fi

\clearpage

\end{document}